\documentclass{article}

\usepackage[preprint]{neurips_2026}

\usepackage[utf8]{inputenc} 
\usepackage[T1]{fontenc}    
\usepackage{hyperref}       
\usepackage{url}            
\usepackage{booktabs}       
\usepackage{amsfonts}       
\usepackage{nicefrac}       
\usepackage{microtype}      
\usepackage{xcolor}         
\usepackage{multirow}
\usepackage{tabularx}
\usepackage[table]{xcolor}
\usepackage{algorithm}
\usepackage{algorithmic}
\usepackage{amsmath}
\usepackage[most]{tcolorbox}
\usepackage{xcolor}
\usepackage{amsthm}
\usepackage{amssymb}
\usepackage{enumitem}
\usepackage{wrapfig}

\definecolor{mygray}{RGB}{210,210,210}
\usepackage{arydshln}
\newtheorem{theorem}{Theorem}[section]
\newtheorem{lemma}[theorem]{Lemma}

\newcommand{\figref}[1]{Figure \ref{#1}}

\newcommand{\secref}[1]{Section \ref{#1}}

\newcommand{\appendixref}[1]{Appendix \ref{#1}}

\newcommand{\theoremref}[1]{Theorem \ref{#1}}
\newcommand{\lemmaref}[1]{Lemma \ref{#1}}

\theoremstyle{remark}

\definecolor{takeawaybg}{RGB}{235,242,250}

\newtcolorbox{takeawaybox}{
    enhanced,
    colback=takeawaybg,
    colframe=black,
    boxrule=0.6pt,
    arc=2pt,
    outer arc=2pt,
    left=3pt,
    right=3pt,
    top=3pt,
    bottom=3pt,
    width=\linewidth,
    before skip=4pt,
    after skip=4pt,
}

\title{When Self-Belief Misleads: Active Label Acquisition for Reinforcement Learning with Verifiable Rewards}

\author{%
\begin{tabular}{c}
Li Wang$^{1}$\thanks{Equal contribution.} \quad
Xiaodong Lu$^{1}$\footnotemark[1] \quad
Xiaohan Wang$^{1}$\thanks{Corresponding authors. Emails:
\texttt{wangxiaohan17@meituan.com},
\texttt{tianhao.peng@ntu.edu.sg},
\texttt{yinguojun02@meituan.com}.} \quad
Yikun Ban$^{2}$ \\
Jiajun Chai$^{1}$ \quad
Wei Lin$^{1}$ \quad
Tianhao Peng$^{3}$\footnotemark[2] \quad
Guojun Yin$^{1}$\footnotemark[2] \\
\\
$^{1}$Meituan \quad
$^{2}$Beihang University \quad
$^{3}$Nanyang Technological University
\end{tabular}%
}

\begin{document}

\maketitle

\begin{abstract}
  Large Language Models (LLMs) have achieved remarkable advancements in reasoning capabilities empowered by Reinforcement Learning with Verifiable Rewards (RLVR). Nonetheless, RLVR intrinsically relies on ground-truth labels for reward computation, the acquisition of which is often prohibitively expensive in real-world scenarios. While unsupervised RLVR paradigms attempt to circumvent this by training on pseudo-labels, they are notoriously susceptible to training collapse. Moreover, different samples often exhibit varying annotation values. In this paper, we propose Reinforcement Learning with Active Verifiable Rewards (RLAVR), which actively acquires ground-truth labels for a small set of selected samples and integrates them with pseudo-labels, thereby stabilizing training dynamics and improving performance under limited annotation budgets. To identify valuable samples, we propose the Corrective Advantage Gap (CAG) metric and analyze the sample-level supervision value. Building on this, we introduce \underline{C}orrection-\underline{A}ware \underline{R}eliability \underline{E}stimation for RLAVR (CARE), which translates the oracle CAG criterion into a practical pre-query acquisition policy to substantially improve training stability. Extensive experiments across diverse domains, model families, and model scales demonstrate the effectiveness and generality of our approach. Our code is available at \url{https://github.com/Lumina04/CARE}.
\end{abstract}

\section{Introduction}
From DeepSeek-R1 to the GPT-O series, Reinforcement Learning with Verifiable Rewards (RLVR) \cite{wang2026survey,guan2025rstar,yang2025code,chen2025acereason} has emerged as a critical method for advancing the reasoning capabilities of Large Language Models (LLMs). 
By utilizing stable feedback from the verifiers (e.g., rule-based verifiers), RLVR enables the LLM policy to continuously refine its behavior through an iterative cycle of exploration and policy optimization. Although effective, RLVR faces a fundamental challenge in its dependence on ground-truth labels to compute rewards, such as standard answers for math problems or test cases for coding tasks. These labels are often costly to obtain, and the resulting reliance on external supervision constrains the policy’s ability to self-improve and explore autonomously.

Unsupervised RLVR~\cite{zuo2025ttrl, he2026far} takes a significant step toward eliminating the reliance on supervised data. It leverages intrinsic model signals, such as majority voting~\cite{zhang2025co, liao2026tool} and self-consistency~\cite{zhao2025learning, ghimire2026prism}, as pseudo-labels for reward computation. While promising, recent studies suggest that the model's internal beliefs are not always reliable~\cite{he2026far, tan2025diagnosing, zhou2025evolving}. Pure internal signal-based supervision may form a dead loop that reinforces erroneous beliefs and ultimately leads to training collapse (shown in \secref{sec:reliability_of_unsupervised_RLVR}). This failure mode suggests the need for external supervision, yet the supervision must be used sparingly: acquiring ground-truth labels for all samples would lead to an expensive annotation cost. Consequently, a natural question arises: 

\emph{How can we actively acquire sparse ground-truth labels for a small set of critical samples and combine them with pseudo-labels to provide reliable training signals?}

To answer this question, this paper makes the following three technical contributions.

\textbf{New Problem and Justification.} Motivated by this need, we introduce Reinforcement Learning with Active Verifiable Rewards (RLAVR), to our knowledge, the first active label-acquisition problem for RLVR. Specifically, we formulate RLAVR as a budget-constrained label assignment problem: under a limited annotation budget, the learner actively selects which samples to query for ground-truth annotations and combines the verified labels with pseudo-labels to construct mixed training signals. To justify this problem setting, we examine the supervision value of different samples, measured by the performance gain obtained from correcting erroneous pseudo-labels with ground-truth labels. Our analysis reveals that samples exhibit highly heterogeneous supervision value, suggesting that sparse annotations should be allocated selectively rather than uniformly.

\textbf{CAG-Guided Acquisition Framework.} To solve the proposed problem, we first propose the Corrective Advantage Gap (CAG), a metric that characterizes the discrepancy between the advantage vectors induced by ground-truth labels and pseudo-labels. Empirically, CAG faithfully reflects supervision value: across different settings, selecting samples with larger CAG consistently yields stronger performance gains than alternative selection strategies (shown in \secref{sec:supervision_value_analysis}). However, CAG requires ground-truth labels and therefore cannot be directly computed before annotation. Therefore, we further propose \underline{C}orrection-\underline{A}ware \underline{R}eliability \underline{E}stimation for RLAVR (CARE), a method for introducing sparse supervision into RLAVR by predicting sample reliability and supervision value before querying. Specifically, CARE trains a dedicated two-stage classifier network: the first-stage classifier estimates pseudo-label reliability and retains reliable pseudo-label signals, while the second-stage classifier predicts expected CAG for unreliable samples.  Guided by the classifiers, CARE prioritizes samples whose pseudo-labels are likely to induce high supervision value, thereby fully exploiting reliable pseudo-labeled samples while suppressing erroneous internal reward signals.

\textbf{Theoretical and Empirical Validation.} Theoretically, we justify CAG by deriving a CAG-dependent upper bound on gradient alignment: larger advantage deviations imply a smaller guaranteed cosine alignment between ground-truth-induced and pseudo-label-induced GRPO gradients. This explains why high-CAG samples can introduce more misaligned optimization signals if left uncorrected. Empirically, with only a 20\% annotation budget, CARE consistently improves over TTRL across different model scales and families on both mathematical and logical reasoning tasks, while substantially narrowing the gap to fully supervised RLVR. Further ablations validate the effectiveness of reliability estimation, CAG prediction, and sample dropping.



\section{Related Work}
\subsection{Unsupervised and Semi-supervised RLVR}
The success of DeepSeek-R1~\cite{guo2025deepseek} has demonstrated the substantial potential of RLVR~\cite{shao2024deepseekmath} for enhancing the reasoning capabilities of LLMs through reinforcement learning. Building on this insight, a growing body of work~\cite{lu2026contextual,lin2025awpo,lin2025rest} has explored the RLVR paradigm and achieved promising results. However, standard RLVR relies on ground-truth labels to compute reward signals for training, whereas in many practical scenarios such labels are difficult or costly to obtain. To enable training in unlabeled settings, numerous studies have investigated unsupervised RLVR, where pseudo-labels are constructed from signals such as majority voting~\cite{zuo2025ttrl,wang2025beyond}, self-certainty~\cite{zhao2025learning,ghimire2026prism}, and entropy~\cite{zhang2025right,agarwal2025unreasonable,prabhudesai2025maximizing}. Some methods further introduce more refined designs by modeling answer distributions~\cite{yu2025restrain} or exploiting discrepancies between correct and incorrect reasoning trajectories~\cite{zhang2025consistent}, thereby yielding additional performance gains. Concurrently, several works have noted that fully label-free paradigms are prone to training instability or even collapse~\cite{he2026far}, and have mitigated this by designing specific reward mechanisms~\cite{tan2025diagnosing,zhou2025evolving} or trained verifiers~\cite{pan2026coverrl}. As an alternative approach, semi-supervised RLVR methods~\cite{yang2025trapo,luo2026memreward} incorporate a limited amount of ground-truth supervision to guide sample selection or pseudo-label generation, leading to more effective training. Although RLAVR also involves mixed training with pseudo-labels and ground-truth labels, its focus lies in actively selecting valuable samples for annotation. Semi-supervised RLVR methods typically assume a given labeled set and focus on exploiting existing labeled and unlabeled data. In contrast, RLAVR focuses on deciding which unlabeled samples to annotate and combining their ground-truth labels with pseudo-labels to improve training under limited annotation budgets. 


\subsection{Active Learning}
Active learning~\cite{aggarwal2014active} aims to reduce annotation costs by selecting the most informative samples for labeling, thereby maximizing model improvement under limited supervision. To this end, a variety of acquisition strategies have been explored in the literature, including uncertainty sampling~\cite{wang2014new,geifman2017deep}, query-by-committee~\cite{kee2018query}, expected model change~\cite{cai2013maximizing}, expected error reduction~\cite{mussmann2022active}, and density-aware methods~\cite{wang2021density}. In the LLM era, where high-quality human annotation is substantially more expensive, active learning has become increasingly attractive as a mechanism for improving data efficiency while maintaining or even enhancing downstream performance. Recent studies have incorporated active selection into Supervised Fine-Tuning (SFT)~\cite{bayer2026activellm}, alignment and post-training pipelines, particularly in Reinforcement Learning from Human Feedback (RLHF)~\cite{bai2022training}, where samples are selected based on signals such as response redundancy~\cite{feng2025duo} or model uncertainty~\cite{duan2025efficient} to improve preference data collection efficiency~\cite{liu2024dual,mehta2023sample}. Related ideas have also been explored in post-training settings, for example, by actively acquiring labels for Direct Preference Optimization (DPO)~\cite{rafailov2023direct,ji2024reinforcement} or by using active data filtering to improve the stability of reinforcement learning~\cite{yi2026learn}. Unlike prior work, we target RLAVR by integrating active learning into online RL training, enabling selective label acquisition and continuous improvement of model capabilities.
\section{Preliminary}
\subsection{Group Relative Policy Optimization}
Group Relative Policy Optimization (GRPO)~\cite{shao2024deepseekmath} is a popular policy optimization method for RLVR, which first introduces the group-based advantage estimator. Specifically, for each prompt $x_i$, the current policy samples a group of $G$ rollouts, denoted by $\mathcal{O}_i=\{o_{i,g}\}_{g=1}^G$, and each rollout $o_{i,g}$ is assigned a scalar reward $r_{i,g}$ by a rule-based verifier. GRPO computes group-relative advantages by normalizing rewards within each rollout group,
\begin{equation}
A_{i,g}=\frac{r_{i,g}-\mu_i(r)}{\sigma_i(r)+\epsilon},
\label{eq:grpo_adv}
\end{equation}
where $\mu_i(r)$ and $\sigma_i(r)$ denote the mean and standard deviation of $\{r_{i,g}\}_{g=1}^G$, respectively, and $\epsilon$ is a small constant for numerical stability. Since rewards are defined at the rollout level, the same advantage $A_{i,g}$ is shared across all tokens in $o_{i,g}$. The policy is then optimized by
\begin{equation}
\begin{aligned}
\mathcal{J}_{\mathrm{GRPO}}(\theta)
=&\;
\mathbb{E}_{x_i,\mathcal{O}_i}
\Bigg[
\frac{1}{G}\sum_{g=1}^G \frac{1}{|o_{i,g}|}\sum_{t=1}^{|o_{i,g}|}
\Big(
\min\big(
\rho_{i,g,t}A_{i,g}, \mathrm{clip}(\rho_{i,g,t},1-\delta,1+\delta)A_{i,g}
\big)]
\Big)
\Bigg],
\end{aligned}
\label{eq:grpo_obj}
\end{equation}
where $\rho_{i,g,t}=\pi_\theta(o_{i,g,t}\mid x_i,o_{i,g,<t})/\pi_{\theta_{\mathrm{old}}}(o_{i,g,t}\mid x_i,o_{i,g,<t})$ is the token-level importance ratio, and $\delta$ controls the clipping range.

\subsection{Reinforcement Learning with Active Verifiable Rewards} \label{sec:problem_formulation}
We introduce Reinforcement Learning with Active Verifiable Rewards (RLAVR). The studied problem can be formulated as a budget-constraint label assignment problem. Formally, assume the training proceeds for $T$ rounds. At each round $t$, a batch of prompts $B_t = \{x_1^t, \ldots, x_n^t\}$ is first sampled. Then, for each prompt $x_i^t$, the policy $\pi_{\theta_t}$ generates a group of responses $\{o_{i,g}^t\}_{g=1}^G$, resulting in a set of prompt-response pairs $C_t = \{(x_i^t, \{o_{i,g}^t\}_{g=1}^G)\}_{i=1}^n$. We hope to select a subset of informative samples $I_t^{L} \subseteq [n]$ for label querying and obtain ground-truth labels $y_i^t$ for $i \in I_t^{L}$, while for the remaining samples $i \notin I_t^{L}$, pseudo-labels $\hat{y}_i^t = \mathcal{A}(x_i^t,\{o_{i,g}^t\})$ are constructed via an aggregation function. All labeled samples, including both queried and pseudo-labeled ones, are then used to form a training batch for reward and advantage computation, based on which the policy is updated from $\pi_{\theta_t}$ to $\pi_{\theta_{t+1}}$ according to Eq.~\eqref{eq:grpo_obj}. Our goal is to maximize the performance of the best checkpoint over $T$ rounds under a predefined labeling budget constraint $p$:
\begin{align}
    \max_{\{I_t^{L}\}_{t=1}^T} \ \max_{t \in [T]} \ \mathrm{Perf}(\pi_{\theta_t})
    \quad \text{s.t.} \quad
    \frac{1}{nT}\sum_{t=1}^T |I_t^{L}| \le p,
\end{align}
where $\mathrm{Perf}(\pi_{\theta_t})$ denotes the evaluation performance of checkpoint $\pi_{\theta_t}$. It is worth noting that RLAVR differs from semi-supervised RLVR, where labeled samples are predefined. Instead, the goal is to actively select samples for annotation during online training.
\section{Preliminary Analysis of RLAVR}

\subsection{Reliability Analysis for Unsupervised RLVR Training} \label{sec:reliability_of_unsupervised_RLVR}
To assess the reliability of the model's internal beliefs, we compared the GRPO curves trained with true labels, pseudo-labels generated through majority voting, and pseudo-labels with errors masked (detailed experimental settings and additional results can be found in Appendix~\ref{appendix:more_collapse_pre_exp}.) As shown in \figref{fig:pre_exp1}, the performance of the model trained with pseudo-labels rapidly deteriorates after an initial phase of improvement, aligning with previous studies \cite{he2026far, tan2025diagnosing, zhou2025evolving} that the model's internal signals are not always reliable. Notably, masking erroneous pseudo-labels stabilizes the model's performance, further attributing the primary cause of training collapse to the misguidance of incorrect internal beliefs. The change in pseudo-label accuracy exposes the underlying mechanism: before the sharp performance decline, the accuracy of pseudo-labels continually decreases, forming a dead loop that keeps reinforcing incorrect internal beliefs, ultimately leading to training collapse.

\vspace{0.25cm}
\begin{takeawaybox}
\textbf{Takeaway :} Unsupervised RLVR driven by internal beliefs is not always reliable. Erroneous beliefs can be continuously reinforced, causing the policy to deviate from the correct optimization direction and eventually collapse.
\end{takeawaybox}

\begin{figure}[t]
    \centering
    \begin{minipage}[t]{0.32\linewidth}
        \centering
        \includegraphics[width=\linewidth]{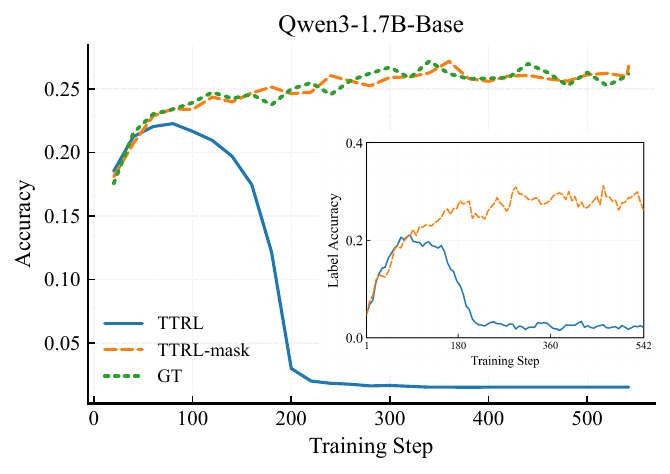}
    \end{minipage}
    \begin{minipage}[t]{0.32\linewidth}
        \centering
        \includegraphics[width=\linewidth]{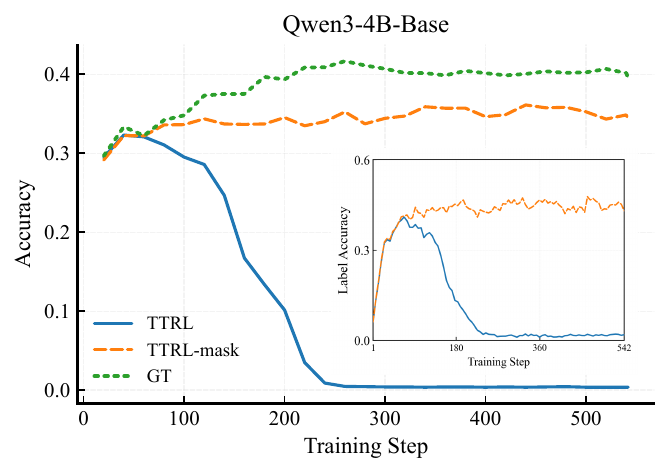}
    \end{minipage}
    \begin{minipage}[t]{0.32\linewidth}
        \centering
        \includegraphics[width=\linewidth]{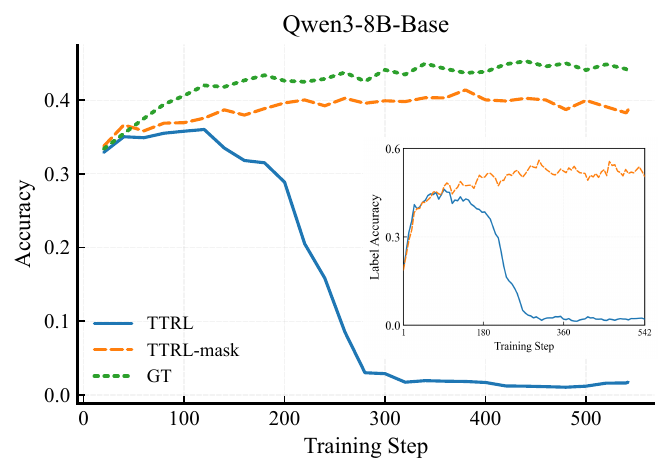}
    \end{minipage}
    \caption{Training dynamics across three Qwen3 models on math tasks.}
    \label{fig:pre_exp1}
\end{figure}

\begin{figure}[t]
    \centering
    \begin{minipage}[t]{0.32\linewidth}
        \centering
        \includegraphics[width=\linewidth]{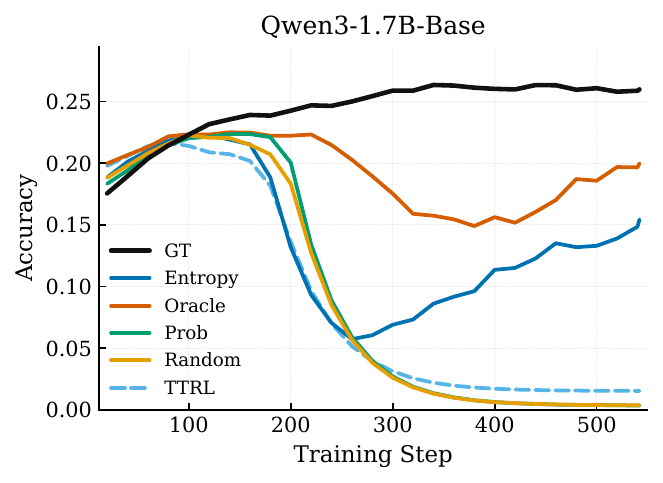}
    \end{minipage}
    \begin{minipage}[t]{0.32\linewidth}
        \centering
        \includegraphics[width=\linewidth]{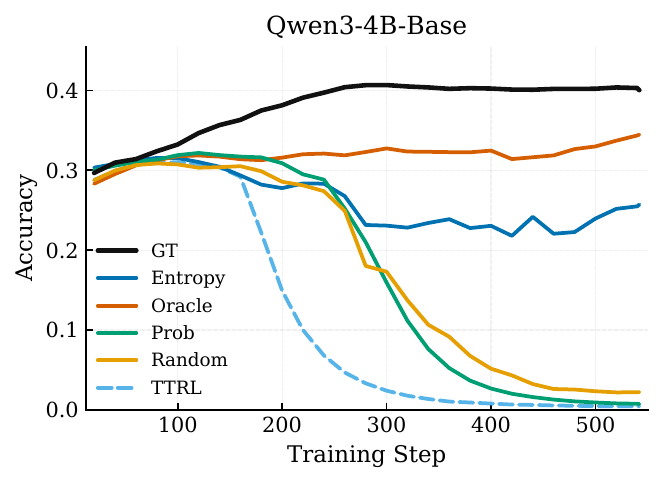}
    \end{minipage}
    \begin{minipage}[t]{0.32\linewidth}
        \centering
        \includegraphics[width=\linewidth]{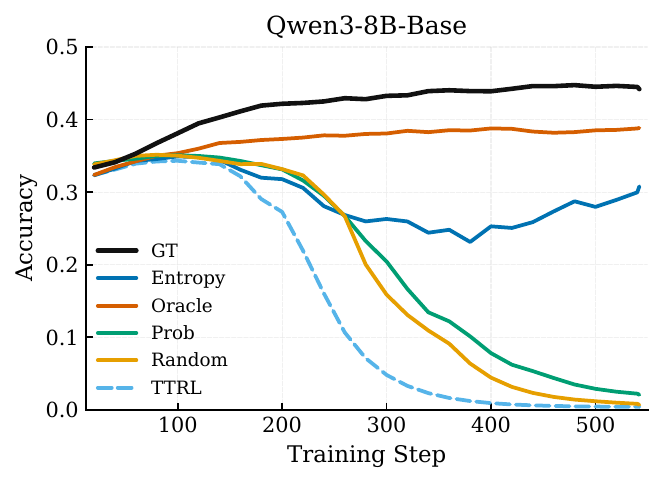}
    \end{minipage}
    \caption{Comparison of label acquisition strategies across three Qwen3 models on math tasks.}
    \label{fig:pre_exp2}
\end{figure}


\subsection{Supervision Value Analysis of Erroneous Samples}
\label{sec:supervision_value_analysis}
The preceding results indicate that erroneous pseudo-labels are the main driver of training collapse. As a preliminary study, we further ask whether correcting different erroneous pseudo-labels leads to similar performance gains, which we interpret as their supervision value. Specifically, given a limited annotation budget, we compare the effect of correcting different subsets of incorrect pseudo-labels.

Intuitively, an erroneous sample should have a high supervision value if its pseudo-label induces a learning signal that differs substantially from the one induced by the ground-truth label. Following this idea, we quantify supervision value by the advantage difference between ground-truth labels and pseudo-labels. Specifically, for the \(G\) responses to a prompt \(x_i\), let \(A_i=(A_{i,1},\dots,A_{i,G})\) and \(\tilde A_i=(\tilde A_{i,1},\dots,\tilde A_{i,G})\) denote the advantage vectors computed with ground-truth and pseudo-labels, respectively, as per Eq.~\eqref{eq:grpo_adv}. We define the \emph{Corrective Advantage Gap} (CAG) score as
\begin{equation}
\mathrm{s}_i = \|A_i-\tilde A_i\|_2.
\label{eq:cag}
\end{equation}

As shown in \figref{fig:pre_exp2}, assigning ground-truth labels by CAG (Oracle) yields the largest performance gain among the compared selection criteria, including random and response-distribution-based selection. Notably, the other strategies still suffer from training collapse, suggesting that CAG more faithfully identifies erroneous pseudo-labels with high supervision value. Surprisingly, even when the remaining pseudo-labeled samples are masked out and training uses only the acquired ground-truth labels, the CAG-selected samples still achieve strong performance gains (see Appendix~\ref{appendix:pre_mask_exp} for details and more results). This further shows that high-CAG samples are not only useful for removing harmful pseudo-labels, but also consistently provide informative supervision signals for learning.

\begin{takeawaybox}
\textbf{Takeaway:} Different samples carry substantially different supervision value, and CAG provides a reliable signal for identifying samples whose annotation yields larger learning gains.
\end{takeawaybox}


\section{Methodology}
\label{sec:method}


Based on the preceding analysis, CAG provides an effective criterion for measuring the supervision value of correcting a pseudo-label. However, it is an oracle quantity at acquisition time, since computing CAG requires the ground-truth label. We therefore propose \underline{C}orrection-\underline{A}ware
\underline{R}eliability \underline{E}stimation for RLAVR (CARE), which turns the oracle CAG criterion into a practical pre-query acquisition policy, as illustrated in Figure~\ref{fig:care_method}. CARE estimates pseudo-label reliability and expected supervision value from rollout-derived signals, thereby retaining reliable pseudo-labeled samples for unsupervised optimization while querying unreliable samples with high supervision value. The full algorithm is summarized in Algorithm~\ref{alg:care}.

\begin{figure}[t]
    \centering
    \includegraphics[width=1.0\linewidth]{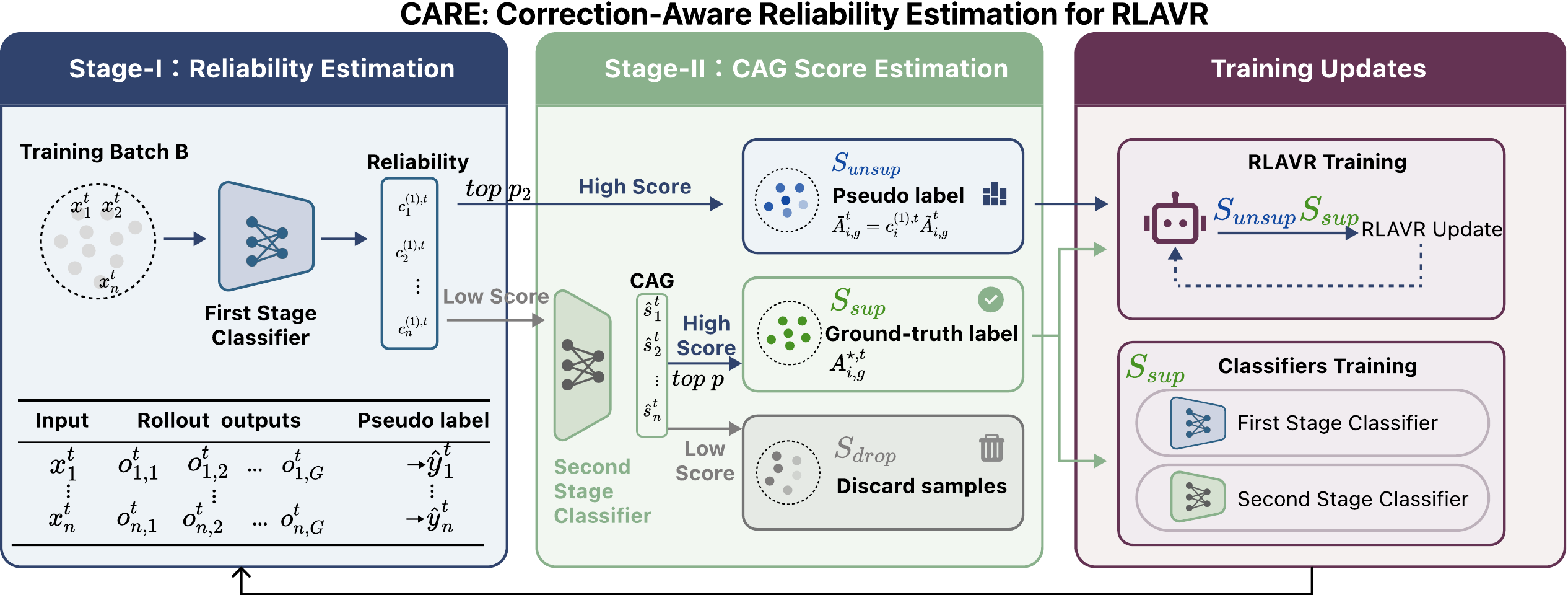}
    \caption{Overall pipeline of CARE. The Stage-I classifier predicts sample reliability, the Stage-II classifier predicts CAG to select informative samples for annotation, and the resulting labeled and pseudo-labeled data are used to jointly train the LLM and the classifiers.}
    \label{fig:care_method}
\end{figure}

\subsection{Overall Framework}
\label{sec:care_framework}

Although CAG shows promise as an annotation-value criterion, applying it uniformly to all samples ignores the hierarchical nature of the budgeted decision. Specifically, when a pseudo-label is correct, it induces the same advantage vector as the ground-truth label and thus has zero CAG; such samples should be retained for low-cost
unsupervised training rather than queried. CAG ranking is meaningful only within the unreliable region: among samples with incorrect pseudo-labels, high-CAG ones are worth annotating because their ground-truth labels would bring a large performance gain, while low-CAG ones can be discarded due to their wrong learning signals. CARE therefore decomposes acquisition into two conditional steps. Stage-I estimates pseudo-label reliability to retain reliable samples, and Stage-II ranks the predicted unreliable samples by expected CAG for annotation.

\subsubsection{Cascaded Classifiers}
\label{sec:cascaded_classifiers}

The cascade implements the conditional acquisition rule described above. At round $t$, for each prompt $x_i^t\in\mathcal{B}^t$ with rollouts $\{o_{i,g}^t\}_{g=1}^G$, CARE first builds a rollout-derived representation $\phi_i^t \in \mathbb R^d$ by
\begin{equation}
\phi_i^t = E(x_i^t,\{o_{i,g}^t\}_{g=1}^G),
\end{equation}
where $E(\cdot)$ is a lightweight feature encoder detailed in Section~\ref{sec:detail_implement}. CARE
then applies two neural classifiers: a reliability classifier
$f_{\psi_1^t}^{\mathrm{rel}}$ and a supervision-value classifier
$f_{\psi_2^t}^{\mathrm{cag}}$. Together, they partition $\mathcal{B}^t$ into
three disjoint subsets: reliable pseudo-labeled samples
$\mathcal{S}_{\mathrm{unsup}}^t$, unreliable high-value samples selected for
annotation $\mathcal{S}_{\mathrm{sup}}^t$, and unreliable low-value samples
discarded from policy optimization $\mathcal{S}_{\mathrm{drop}}^t$. In Stage-I, the reliability classifier outputs
$c_i^{(1),t}=f_{\psi_1^t}^{\mathrm{rel}}(\phi_i^t)\in[0,1]$, interpreted as the probability that the pseudo-label $\hat{y}_i^t$ is equal to the ground truth label $y_i^t$. Prompts with the highest $p_2$ reliability scores are assigned to
$\mathcal{S}_{\mathrm{unsup}}^t$, and the rest are passed to Stage-II. In Stage-II, CARE estimates the supervision value only within this predicted unreliable region. Given a wrong majority-vote pseudo-label, the CAG is determined by the known majority cluster and the number of correct non-majority responses. Thus, instead
of regressing CAG directly or predicting the ground-truth answer identity, $f_{\psi_2^t}^{\mathrm{cag}}$ predicts a distribution over this latent count:
\begin{equation}
\mathbf{d}_i^t=f_{\psi_2^t}^{\mathrm{cag}}(\phi_i^t),\qquad
\hat{s}_i^t=\sum_{k\in\mathcal{K}_i^t} d_{i,k}^t\,s_i^t(k),
\end{equation}
where $d_{i,k}^t$ is the predicted probability that exactly $k$ non-majority
responses are correct, $\mathcal{K}_i^t$ is the admissible set of such counts, and
$s_i^t(k)$ is the corresponding class-wise CAG. Among the Stage-II candidates,
prompts with the highest expected CAG scores under the annotation budget $p$ are
assigned to $\mathcal{S}_{\mathrm{sup}}^t$, while the rest are assigned to
$\mathcal{S}_{\mathrm{drop}}^t$.

\subsubsection{Online Training Procedure}
\label{sec:framework_training}

After partitioning, CARE updates the policy using only
$\mathcal{S}_{\mathrm{sup}}^t\cup\mathcal{S}_{\mathrm{unsup}}^t$; samples in
$\mathcal{S}_{\mathrm{drop}}^t$ are excluded. For annotated samples, rewards and
advantages are computed with the acquired ground-truth labels. For reliable
pseudo-labeled samples, CARE downweights the pseudo-label advantage by the Stage-I
reliability score:
\begin{equation}
A_{i,g}^{\mathrm{mix},t}
=
\begin{cases}
A_{i,g}^{\star,t}, 
& i\in\mathcal{S}_{\mathrm{sup}}^t \; (\text{ground-truth label } y_i^t), \\[2pt]
c_i^{(1),t}\tilde{A}_{i,g}^t, 
& i\in\mathcal{S}_{\mathrm{unsup}}^t \; (\text{pseudo-label } \hat y_i^t).
\end{cases}
\label{eq:mix_a}
\end{equation}
CARE then optimizes Eq.~\eqref{eq:grpo_obj} using
$A_{i,g}^{\mathrm{mix},t}$. After the policy update, newly annotated samples are
added to the labeled buffer and provide online supervision for both classifiers,
yielding the next-round parameters $\psi_1^{t+1}$ and $\psi_2^{t+1}$.

\subsection{Detail Implementations}
\label{sec:detail_implement}

\paragraph{Cascaded Classifier Representation.}
The cascaded classifiers use lightweight rollout-derived representations that can be obtained with negligible additional cost. Following prior observations that hidden states, response length, and answer consistency are correlated with response correctness~\cite{zhang2025reasoning,azaria2023internal,su2025between,manakul2023selfcheckgpt}, we construct $\phi^t_i=E(x_i^t,\{o^t_{i,g}\}_{g=1}^{G})$ from two types of signals: prompt-level statistics summarizing the rollout set and response-level statistics describing individual responses. Specifically, we use the final-layer hidden states of the last token, the valid rollout ratio, the distribution of answer clusters, and normalized response lengths. These signals are directly available from standard rollout statistics or forward-pass hidden states, and therefore introduce negligible additional overhead. Detailed feature definitions are provided in Appendix~\ref{appendix:implementation_details}.

\paragraph{Cascaded Classifiers Network Design.}
To keep the acquisition classifiers lightweight, we implement both stages with MLPs and train them with cross-entropy loss. Besides, to improve classifier learning, we additionally use an auxiliary response-level head to predict the correctness of each response:
\begin{equation}
\mathcal{L}_{\mathrm{stage\text{-}1}}
=
\mathcal{L}_{\mathrm{classifier}}
+
\lambda_{\mathrm{aux}}\mathcal{L}_{\mathrm{aux}},
\label{eq:stage1_classifier_loss}
\end{equation}
where \(\mathcal{L}_{\mathrm{classifier}}\) is the main classification loss and \(\mathcal{L}_{\mathrm{aux}}\) is the auxiliary response-level correctness loss. We further use a training buffer and class-balanced sampling to improve sample efficiency and robustness. More details are provided in Appendix~\ref{appendix:implementation_details}.
\section{Theoretical Analysis}
\label{sec:theory}
In this section, we theoretically analyze the proposed CAG metric by connecting it to the deviation of the optimization direction. For a prompt $x$ and its sampled responses $\{o_i\}_{i=1}^G$, let $g_x$ denote the GRPO gradient computed with ground-truth rewards in \eqref{eq:grpo_obj}, and let $\hat g_x$ denote the gradient computed with pseudo rewards. We measure their directional alignment by
$\cos\theta_x=\langle g_x,\hat g_x\rangle/(\|g_x\|_2\|\hat g_x\|_2)$. For simplicity, we consider the strict on-policy GRPO setting, where the gradient is determined by the group-normalized advantage vector. Specifically, for an advantage vector $A=(A_1,\ldots,A_G)$, the corresponding gradient can be written as
$g(A)=\frac{1}{G}\sum_{i=1}^G A_i s_i$, where
$s_i=\nabla_\theta \sum_{j=1}^{|o_{i}|} \frac{1}{|o_i|}\pi_\theta(o_{i,j}\mid x,o_{i,<j})$ is the score vector of response $o_i$. We first show that the gradient cosine is equivalent to the cosine similarity between the transformed advantage vectors.

\begin{lemma}[Gradient cosine as a transformed advantage cosine] \label{lemma:transformed_cosine_similarity}
Let $S=[s_1,\ldots,s_G]$ be the score matrix and define $K_x=S^\top S$. Let $A$ and $\hat A$ be the advantage vectors induced by ground-truth and pseudo rewards, respectively. If $K_x \succ 0$, then
\begin{equation}
\cos\theta_x
=\frac{\langle g_x,\hat g_x\rangle}{\|g_x\|_2\|\hat g_x\|_2}
=\frac{\left\langle K_x^{1/2}A, K_x^{1/2}\hat A \right\rangle}{\|K_x^{1/2}A\|_2\|K_x^{1/2}\hat A\|_2}
\end{equation}
\end{lemma}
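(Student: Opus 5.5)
The approach is a direct computation: write both gradients as linear images of their advantage vectors under the score matrix, and then express inner products and norms in gradient space through the Gram matrix $K_x$. In the strict on-policy setting the paper writes $g(A)=\frac{1}{G}\sum_{i=1}^G A_i s_i$. With $S=[s_1,\ldots,s_G]$ this becomes $g_x=\frac{1}{G}SA$ and $\hat g_x=\frac{1}{G}S\hat A$.

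The first step is the inner product:
\begin{equation}
\langle g_x,\hat g_x\rangle=\frac{1}{G^2}A^\top S^\top S\hat A=\frac{1}{G^2}A^\top K_x\hat A .
\end{equation}
The same computation gives $\|g_x\|_2^2=\frac{1}{G^2}A^\top K_xA$ and $\|\hat g_x\|_2^2=\frac{1}{G^2}\hat A^\top K_x\hat A$.

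The second step uses the square root of $K_x$. Since $K_x=S^\top S$ is symmetric positive semidefinite, it has a unique symmetric PSD square root $K_x^{1/2}$ with $K_x^{1/2}K_x^{1/2}=K_x$. This gives
\begin{equation}
A^\top K_x\hat A=\langle K_x^{1/2}A,K_x^{1/2}\hat A\rangle, \qquad A^\top K_xA=\|K_x^{1/2}A\|_2^2,
\end{equation}
and likewise for $\hat A$. Substituting into the definition of $\cos\theta_x$, the factors $1/G^2$ in the numerator and $1/G$ in each norm cancel, which yields the claimed identity.

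The only real obstacle is checking that the expression is well defined, since we need $\|g_x\|_2,\|\hat g_x\|_2\neq 0$. This is where $K_x\succ0$ enters. Positive definiteness gives $\|K_x^{1/2}v\|_2^2=v^\top K_xv>0$ for every $v\neq0$, so both denominators are nonzero as soon as $A\neq0$ and $\hat A\neq0$. For a degenerate group whose rewards are all equal, the advantage vector is zero (using $\epsilon>0$ in Eq.~\eqref{eq:grpo_adv}). In that case the cosine is undefined on both sides, so we implicitly assume nondegenerate groups, as the statement does.

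Finally, a remark on notation. The paper's $s_i$ is written with $\pi_\theta$ rather than $\log\pi_\theta$, but the argument uses only the linearity $g(A)=\frac1G SA$, so the identity holds for either definition of the score vectors.
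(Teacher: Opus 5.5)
Your proposal is correct and follows the paper's proof essentially step for step: write $g_x=\frac{1}{G}SA$ and $\hat g_x=\frac{1}{G}S\hat A$, express the inner product and norms through $K_x=S^\top S$, and rewrite these using the symmetric square root $K_x^{1/2}$. Your observation is also accurate: the square root exists for any positive semidefinite $K_x$, so $K_x\succ0$ is what actually guarantees nonzero denominators when $A,\hat A\neq 0$. This is a small improvement over the paper, which invokes positive definiteness only to justify the existence of $K_x^{1/2}$.
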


\textbf{Remark.} (Proof deferred to \appendixref{appendix:proof_of_lemma}.) The lemma shows that gradient alignment can be viewed as the cosine similarity between the transformed advantage vectors $K_x^{1/2}A$ and $K_x^{1/2}\hat A$. The matrix $K_x=S^\top S$ is the empirical NTK matrix over sampled responses, reflecting the intrinsic similarity between responses. The positive definiteness assumption of $K_x \succ 0$ is mild since $G$ is much smaller than the parameter dimension and therefore exact linear dependence among score vectors is unlikely. Hence, the deviation between $A$ and $\hat A$ directly affects the induced optimization direction.

The next theorem gives a direct connection between the proposed CAG metric, i.e., the advantage deviation, and the alignment of the induced optimization direction.

\begin{theorem}[CAG upper-bounds gradient alignment] \label{theorem:CAG_upperbound_gradient_alignment}
Let
$\mathcal H=\{v\in\mathbb R^G:\mathbf 1^\top v=0\}$
be the centered subspace. Assume $K_x \succ 0$ and define $\lambda_{\max}(K_x|_{\mathcal H})=\max_{\substack{v\in\mathcal H\\v\ne0}}\frac{v^\top K_xv}{v^\top v},\lambda_{\min}(K_x|_{\mathcal H})=\min_{\substack{v\in\mathcal H\\v\ne0}}\frac{v^\top K_xv}{v^\top v},$ and $\kappa(x)=\frac{\lambda_{\max}(K_x|_{\mathcal H})}{\lambda_{\min}(K_x|_{\mathcal H})}$. Then, for any nonzero CAG score $d_x=\|A-\hat A\|_2>0$, the gradient alignment satisfies
\begin{equation}
\cos\theta_x \le 1- \frac{2}{4\kappa(x)\frac{G}{d_x^2}-(\kappa(x)-1)}.
\end{equation}
\end{theorem}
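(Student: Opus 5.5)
The plan is to reduce the statement to a comparison between two angles: the Euclidean angle $\alpha$ between $A$ and $\hat A$, and the angle $\psi=\theta_x$ between $K_x^{1/2}A$ and $K_x^{1/2}\hat A$. The second angle equals the gradient angle by Lemma~\ref{lemma:transformed_cosine_similarity}. I will work with half-angle cotangents. A short computation shows the claimed bound is equivalent to
\[
\cot^2(\psi/2)\le \kappa(x)\,\Big(\tfrac{4G}{d_x^2}-1\Big),
\]
via the identity $1-\cos\psi = 2/(1+\cot^2(\psi/2))$. So it suffices to prove two things: (i) $\cot^2(\alpha/2)\le 4G/d_x^2-1$, a purely Euclidean fact about GRPO advantages; and (ii) $\cot^2(\psi/2)\le\kappa(x)\cot^2(\alpha/2)$, a Wielandt/Bauer--Householder-type distortion bound for $K_x^{1/2}$ on $\mathcal H$.

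\textbf{Step (i).} By Eq.~\eqref{eq:grpo_adv}, both $A$ and $\hat A$ are centered, so $A,\hat A\in\mathcal H$. Each also has squared norm $G\sigma^2/(\sigma+\epsilon)^2$, which equals $G$ in the idealized case $\epsilon=0$ with nondegenerate groups. Note that $d_x>0$ forces at least one of them to be nonzero. With $\|A\|=\|\hat A\|=\sqrt G$ we get
\[
1-\cos\alpha = \frac{d_x^2}{2G}, \qquad\text{i.e.}\qquad \sin^2(\alpha/2)=\frac{d_x^2}{4G},
\]
and hence $\cot^2(\alpha/2)=4G/d_x^2-1$ exactly.

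\textbf{Step (ii).} I will use the variational identity: for nonzero $u,v$ at angle $\psi$,
\[
\tan^2(\psi/2)=\min_{t>0}\frac{\|u-tv\|^2}{\|u+tv\|^2}.
\]
To verify it, rescale $u$ and $v$ to unit vectors. The ratio becomes $(1+t^2-2tc)/(1+t^2+2tc)$, which is minimized at $t=1$ with value $(1-c)/(1+c)$. Apply this with $u=K_x^{1/2}A$ and $v=K_x^{1/2}\hat A$. For every $t$, both $A-t\hat A$ and $A+t\hat A$ lie in $\mathcal H$, so by the definition of $\lambda_{\min}(K_x|_{\mathcal H})$ and $\lambda_{\max}(K_x|_{\mathcal H})$,
\[
\frac{\|K_x^{1/2}(A-t\hat A)\|^2}{\|K_x^{1/2}(A+t\hat A)\|^2}\;\ge\;\frac{1}{\kappa(x)}\cdot\frac{\|A-t\hat A\|^2}{\|A+t\hat A\|^2}.
\]
This uses only the Rayleigh quotients of $K_x$ on $\mathcal H$; we never need $K_x^{1/2}$ to preserve $\mathcal H$. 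Taking the minimum over $t$ on both sides gives $\tan^2(\psi/2)\ge\tan^2(\alpha/2)/\kappa(x)$, which is (ii). Combining with (i) and substituting into $1-\cos\psi=2/(1+\cot^2(\psi/2))$ yields the theorem. Degenerate cases, where $\alpha=0$ or an advantage vector vanishes, will be checked separately: there $d_x>0$ either cannot occur or makes the bound trivial.

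\textbf{Main obstacle.} The delicate point is step (i) when $\epsilon>0$, or when one group has constant reward. Then $\|A\|$ and $\|\hat A\|$ are no longer equal to $\sqrt G$, and with unequal norms $a,b\le\sqrt G$ one only has
\[
1-\cos\alpha=\frac{d_x^2-(a-b)^2}{2ab},
\]
which need not dominate $d_x^2/(2G)$. I would first try to state the result in the idealized normalization $\epsilon\to0$, with both groups nondegenerate, which is what the clean form of the bound suggests. If that is not acceptable, I would treat the $\epsilon$-correction as a perturbation term. Everything else in the argument is exact.
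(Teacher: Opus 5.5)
Your reduction to $\cot^2(\psi/2)\le\kappa(x)\cot^2(\alpha/2)$ is correct, and so is your Step (i). Together they are exactly the paper's intermediate bound $\cos\theta_x\le\frac{\kappa(1+\gamma)-(1-\gamma)}{\kappa(1+\gamma)+(1-\gamma)}$ with $\gamma=\cos\alpha$.

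The gap is the variational identity in Step (ii), which is false for obtuse angles. For unit vectors the ratio is $(1-sc)/(1+sc)$ with $s=2t/(1+t^2)\in(0,1]$. When $c<0$ it increases with $s$, so $t=1$ is the \emph{maximizer}, and the infimum over $t>0$ is $1<\tan^2(\psi/2)$, never attained.

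On the $K_x$ side this is harmless. $\tan^2(\psi/2)$ is the value of the ratio at $t^\ast=\|K_x^{1/2}A\|_2/\|K_x^{1/2}\hat A\|_2$, so it dominates the infimum.

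On the Euclidean side, however, you need $\inf_{t>0}\|A-t\hat A\|_2^2/\|A+t\hat A\|_2^2\ge\tan^2(\alpha/2)$. This fails whenever $A^\top\hat A<0$, i.e.\ $d_x^2>2G$. There your chain only yields $\tan^2(\psi/2)\ge 1/\kappa(x)$, i.e.\ $\cos\theta_x\le(\kappa(x)-1)/(\kappa(x)+1)$. That is strictly weaker than the theorem, whose right-hand side tends to $-1$ as $d_x^2\to4G$.

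This is not a corner case. If the majority vote is wrong, the pseudo and true rewards are indicators of disjoint sets $M$ and $C\neq\emptyset$. Their centered versions have inner product $-|M||C|/G<0$. So $\alpha>\pi/2$ is precisely the situation CAG is designed to measure.

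Part of the hole can be patched within your framework. If $\psi\ge\pi/2$, apply the identity to the acute pair $(K_x^{1/2}A,-K_x^{1/2}\hat A)$ and evaluate at $t=1$. Using $\|A\|_2=\|\hat A\|_2$, this gives $\cot^2(\psi/2)\le\|K_x^{1/2}(A+\hat A)\|_2^2/\|K_x^{1/2}(A-\hat A)\|_2^2\le\kappa(x)\|A+\hat A\|_2^2/\|A-\hat A\|_2^2=\kappa(x)\cot^2(\alpha/2)$.

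The remaining case $\alpha>\pi/2>\psi$ cannot be obtained from Rayleigh comparisons made one $t$ at a time. For acute $\psi$, the comparison at $t^\ast$ bounds $\cot^2(\psi/2)$ by $\kappa(x)\|A+t^\ast\hat A\|_2^2/\|A-t^\ast\hat A\|_2^2$. For obtuse $\alpha$, this Euclidean ratio is minimized at $t=1$, so it is at least, not at most, $\cot^2(\alpha/2)$.

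What you need is the joint two-dimensional structure of $K_x$ on $\mathrm{span}(A,\hat A)$, which is what the paper exploits:
\begin{itemize}
\item take $p\propto A+\hat A$ and $q\propto A-\hat A$ orthonormal in $\mathcal H$;
\item compress $K_x$ to $B=Q^\top K_xQ$, with eigenvalue ratio $\ell\le\kappa(x)$ and eigenbasis rotated by $\phi$;
\item write the cosine explicitly as $F_\ell(z)$ with $z=\cos2\phi$;
\item show it is nondecreasing in $z$ and then in $\ell$.
\end{itemize}
This handles all $\gamma\in(-1,1)$ uniformly.

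An equivalent repair is to integrate the angular speed of $B^{1/2}$ along the arc from $A$ to $\hat A$. An arc of fixed length $\alpha$ minimizes this integral when it is centered on the top eigenvector of $B$.
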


\textbf{Remark.} (Proof deferred to~\appendixref{appendix:proof_of_theorem}.) The theorem provides theoretical insight for CAG being a meaningful indicator of pseudo-label quality: a larger advantage deviation $d_x$ yields a smaller guaranteed upper bound on the gradient alignment, implying that pseudo labels may introduce a more misaligned optimization signal. The bound also shows that this relationship depends on the geometry of the sampled responses through the condition number $\kappa(x)$. When $\kappa(x)$ is large, the same advantage deviation can have a less predictable effect on the gradient direction due to anisotropic score-vector geometry. Notably, when $\kappa(x)$ takes the minimum value, i.e., $\kappa(x)=1$, the bound becomes tight and reduces to the exact identity $\cos\theta_x=1-\frac{d_x^2}{2G}$.

\section{Experiment}
\subsection{Experiment Setup}
\paragraph{Baselines.}
We compare CARE against four groups of baselines.
(1) \textbf{Unsupervised training.} We use TTRL as a representative method.
(2) \textbf{Semi-supervised RLVR methods.} Since our setting differs from semi-RLVR, we adapt semi-RLVR~\cite{yang2025trapo,luo2026memreward} to the RLAVR setting by randomly selecting samples for ground-truth annotation.
(3) \textbf{Active learning methods.} We consider three selection criteria: Entropy over the answer-cluster distribution, Prob based on the average response probability, and Oracle based on the ground-truth CAG score. 
For both semi-supervised and active-learning baselines, the selected samples receive ground-truth annotations, while the remaining samples are assigned majority-voting labels.
(4) \textbf{Fully supervised training.} GT uses ground-truth labels for all samples.
All methods are trained with GRPO~\cite{shao2024deepseekmath}.

\paragraph{Implementation and Evaluation.}
Following URLVR~\cite{he2026far}, we construct the training set from DAPO-17k~\cite{yu2025dapo} and train all methods for two epochs using Qwen3-Base~\cite{yang2025qwen3} backbones (1.7B, 4B, and 8B) and Phi4-mini-instruct~\cite{abdin2024phi}. We set the labeling budget to $p=20\%$. Evaluation is performed on out-of-domain mathematical reasoning benchmarks: MATH500~\cite{hendrycks2021measuring}, AIME24, AIME25, AMC23, Hmmt Feb 25, and Olympiad~\cite{he2024olympiadbench}. Following URLVR, we set the sampling temperature to 0.6 and use average@8 for evaluation, applying average@32 for smaller datasets (AIME, AMC, Hmmt) to improve stability. Correctness is determined by comparing each answer with the corresponding reference. Further details on the datasets and implementation are provided in Appendices~\ref{appendix:dataset_details} and~\ref{appendix:implementation_details}. Additional results on Phi4-mini-instruct are reported in Appendix~\ref{appendix:phi4_main_exp}. We also conduct evaluations in the logical reasoning domain using the K\&K dataset~\cite{xie2025memorization}, as shown in Appendix~\ref{appendix:kk_main_exp}.

\subsection{Main Result}
\textbf{CARE improves training effectiveness and stabilizes training.} As shown in Table~\ref{tab:main-results}, CARE consistently outperforms the unsupervised training method TTRL, semi-supervised training method Random and other active-learning-based label selection methods, including  Entropy, Prob and Oracle. Figure~\ref{fig:main_exp} (left) shows that most methods suffer from training collapse under erroneous labels. Oracle largely alleviates this issue by selecting for annotation the samples that are most misleading to the training signal, validating the rationality of the CAG metric; however, computing CAG relies on ground-truth labels. CARE instead predicts CAG with classifiers and selects high-confidence samples for unsupervised training, which not only stabilizes training but also achieves better performance, while maintaining a relatively small gap to GT, demonstrating the effectiveness of CARE.

\textbf{Proper use of unsupervised samples has the potential to improve training.} To disentangle the effect of training collapse on active learning methods, we further evaluate Random, Prob, Entropy, and Oracle without their unsupervised samples, training them only on the annotated samples selected by each method. As shown in Figure~\ref{fig:main_exp} (right), these variants consistently outperform their counterparts trained with additional unsupervised samples, suggesting that excessive erroneous pseudo-labeled samples may even hinder training. In contrast, CARE achieves better performance than most baselines by leveraging high-confidence unsupervised samples, demonstrating both the value of properly utilizing unsupervised data and the effectiveness of CARE.

\textbf{CARE incurs little computational overhead.} To assess computational overhead, we compare the per-step training time of different active learning methods using Qwen3-4B-Base. As shown in Figure~\ref{fig:analysis_exp} (a), CARE incurs rollout and policy optimization costs comparable to other methods, since classifier features are extracted during rollout and the classifier itself is lightweight. Moreover, dropping samples filtered by the second-stage classifier reduces overhead by 21.03\%, demonstrating CARE's efficiency. Appendix~\ref{appendix:train_time} reports the total training time, where CARE maintains a relatively low overall cost even when sample dropping is disabled for fair comparison.

\begin{figure}[t]
    \centering
    \begin{minipage}[t]{0.54\linewidth}
        \centering
        \includegraphics[width=\linewidth]{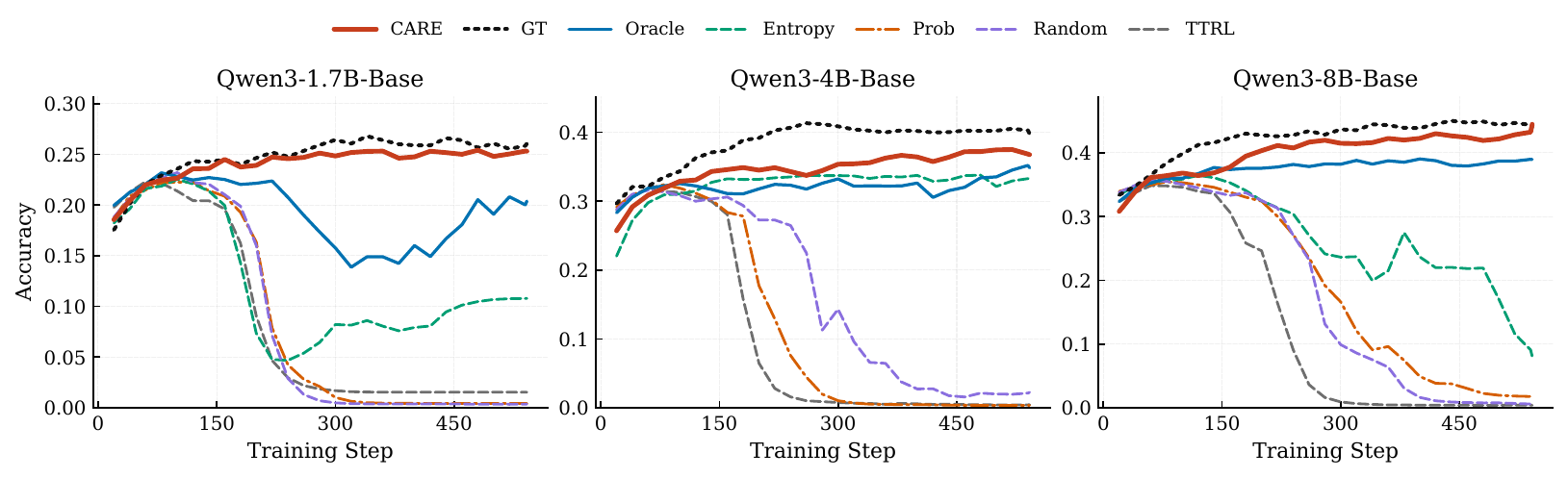}
    \end{minipage}
    \hfill
    \begin{minipage}[t]{0.44\linewidth}
        \centering
        \includegraphics[width=\linewidth]{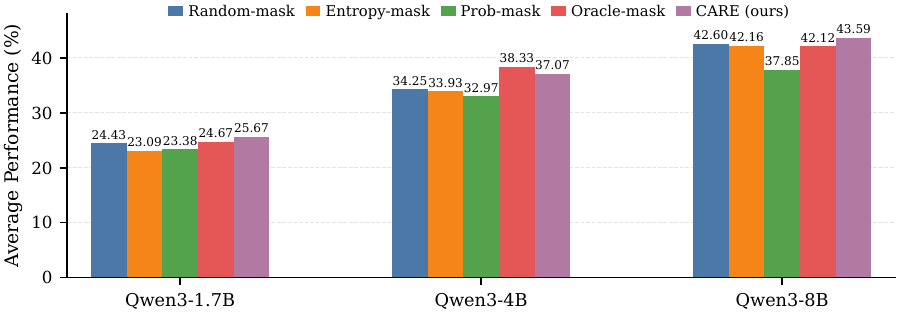}
    \end{minipage}
    \caption{\textbf{(Left)} Comparison of training curves among baseline methods on math tasks. \textbf{(Right)} Comparison of baseline training performance on math tasks after removing unsupervised samples.}
    \label{fig:main_exp}
\end{figure}

\begin{table}[t]
\centering
\small
\setlength{\tabcolsep}{3pt} 
\renewcommand{\arraystretch}{1.2}
\caption{Performance (\%) of Qwen3-Base models on math tasks. Vanilla and GT are reported as reference settings and are excluded from best/second-best ranking. The \textbf{bold} and \underline{underline} indicate the best and second-best results among the remaining methods, respectively.}

\resizebox{0.7\linewidth}{!}{%
\begin{tabular}{lccccccc}
\toprule
\textbf{Methods} & \textbf{AIME24} & \textbf{AIME25} & \textbf{MATH500} & \textbf{AMC23} & \textbf{Hmmt25} & \textbf{Olympiad} & \textbf{Average} \\
\hline
\rowcolor{mygray}
\multicolumn{8}{l}{\textbf{\emph{Qwen3-1.7B-Base}}} \\
\rowcolor{gray!8}
Vanilla & 0.25 & 0.24 & 40.45 & 25.62 & 0.00 & 18.38 & 14.89 \\
\rowcolor{gray!8}
GT  & 8.02 & 4.69 & 67.65 & 47.73 & 1.35 & 30.70 & 26.69 \\
TTRL & \underline{6.88} & 4.17 & \underline{63.65} & \underline{36.33} & 0.10 & \underline{27.99} & \underline{23.18} \\
Random & 6.15 & \underline{5.10} & 62.83 & 35.70 & \underline{0.31} & 27.08 & 22.86 \\
Entropy & 5.52 & 4.79 & 62.02 & 35.47 & 0.10 & 27.28 & 22.53 \\
Prob & 5.94 & 3.44 & 62.73 & 33.98 & 0.10 & 27.19 & 22.23 \\
Oracle & 5.73 & 4.58 & 62.88 & 36.02 & 0.10 & 27.27 & 22.76 \\
\rowcolor{yellow!20}
\textbf{CARE (ours)} & \textbf{9.27} & \textbf{5.83} & \textbf{67.50} & \textbf{40.08} & \textbf{0.73} & \textbf{30.62} & \textbf{25.67} \\
\hline

\rowcolor{mygray}
\multicolumn{8}{l}{\textbf{\emph{Qwen3-4B-Base}}} \\
\rowcolor{gray!8}
Vanilla & 8.02 & 3.44 & 31.17 & 22.73 & 0.52 & 22.46 & 14.72 \\
\rowcolor{gray!8}
GT  & 22.18 & 20.62 & 83.78 & 60.62 & 8.44 & 51.17 & 41.14 \\
TTRL & 11.35 & 7.92 & 76.23 & 53.52 & 1.15 & 42.01 & 32.03 \\
Random & 10.42 & 8.12 & 76.20 & 55.23 & 0.42 & 42.54 & 32.15 \\
Entropy & 11.67 & 7.71 & 76.42 & 51.09 & 1.15 & 41.20 & 31.54 \\
Prob & 11.98 & \underline{10.42} & 76.48 & 54.45 & 0.83 & 42.29 & 32.74 \\
Oracle & \underline{12.60} & 9.05 & \underline{79.60} & \underline{60.08} & \underline{3.02} & \underline{44.95} & \underline{34.95} \\
\rowcolor{yellow!20}
\textbf{CARE (ours)} & \textbf{14.79} & \textbf{16.04} & \textbf{80.18} & \textbf{61.02} & \textbf{4.27} & \textbf{46.12} & \textbf{37.07} \\
\hline

\rowcolor{mygray}
\multicolumn{8}{l}{\textbf{\emph{Qwen3-8B-Base}}} \\
\rowcolor{gray!8}
Vanilla & 10.83 & 7.29 & 55.40 & 43.13 & 1.46 & 34.23 & 25.39 \\
\rowcolor{gray!8}
GT  & 24.79 & 19.06 & 87.08 & 70.31 & 10.63 & 55.70 & 44.59 \\
TTRL & 16.15 & 13.96 & 80.45 & 60.16 & 2.92 & 45.47 & 36.52 \\
Random & 14.58 & 11.67 & 79.70 & 58.91 & 2.19 & 45.71 & 35.46 \\
Entropy & 16.15 & 13.02 & 80.67 & 61.09 & 4.06 & 46.28 & 36.88 \\
Prob & 15.52 & 12.92 & 80.12 & 60.47 & 1.98 & 45.91 & 36.15 \\
Oracle & \underline{18.96} & \underline{15.83} & \underline{82.17} & \textbf{66.41} & \underline{5.62} & \underline{48.06} & \underline{39.51} \\
\rowcolor{yellow!20}
\textbf{CARE (ours)} & \textbf{24.79} & \textbf{20.83} & \textbf{85.60} & \underline{65.23} & \textbf{10.73} & \textbf{54.34} & \textbf{43.59} \\
\bottomrule
\end{tabular}%
}

\label{tab:main-results}
\end{table}

\subsection{Ablation Study}
\textbf{Validation of component design}. To assess each CARE component, we conduct ablation studies, with 4B results shown in Figure~\ref{fig:analysis_exp}(b) and others in Appendix~\ref{appendix:more_ablation_results}. We compare classification-based CAG prediction with regression-based (\textit{CARE-mse}), evaluate the first-stage classifier by removing pseudo-labeled samples (\textit{w/o ps}), and assess the second-stage classifier by randomly selecting $p\%$ of remaining samples (\textit{w/o stage2}). Auxiliary and class-balancing losses are ablated individually (\textit{w/o aux}, \textit{w/o cb}). \textit{CARE-mse} yields inferior CAG predictions and weaker downstream performance, demonstrating the advantage of classification-based CAG prediction. \textit{w/o ps} reduces performance across all models, confirming the value of pseudo-labeled samples for low-risk unsupervised training. \textit{w/o stage2} also degrades performance, highlighting the necessity of the second-stage classifier. Finally, \textit{w/o aux} and \textit{w/o cb} show minor variations, but using both auxiliary and class-balancing losses improves average performance, supporting the design of each CARE component.

\textbf{Analysis of the Unsupervised Sample Selection Ratio $p_2$}. We analyze the effect of different $p_2$ values on CARE in Figure~\ref{fig:analysis_exp}(c), with additional results in Appendix~\ref{app:p2_analysis}. For Qwen3-4B-Base, classifier accuracy remains relatively stable across $p_2$ values. Increasing $p_2$ introduces more training samples and further improves performance, with $p_2=0.25$ achieving the best result.

\paragraph{Analysis of Annotation Budget Ratio $p$.} We compare CARE with the three strongest baselines on Qwen3-1.7B-Base model under varying annotation budget ratios $p$ in Figure~\ref{fig:analysis_exp}(d). CARE consistently performs best across all budgets. As $p$ increases, performance improves: CARE matches GT with only $40\%$ of annotations and surpasses GT at $80\%$. These results show that CARE remains effective under limited annotation budgets and maintains consistent advantages over baselines.

\begin{figure}[t]
    \centering
    \begin{minipage}[t]{0.20\linewidth}
        \centering
        \includegraphics[width=\linewidth]{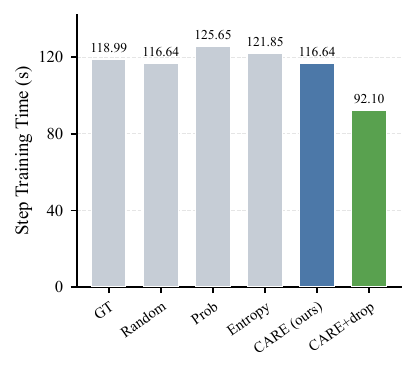}
    \end{minipage}
    \hfill
    \begin{minipage}[t]{0.26\linewidth}
        \centering
        \includegraphics[width=\linewidth]{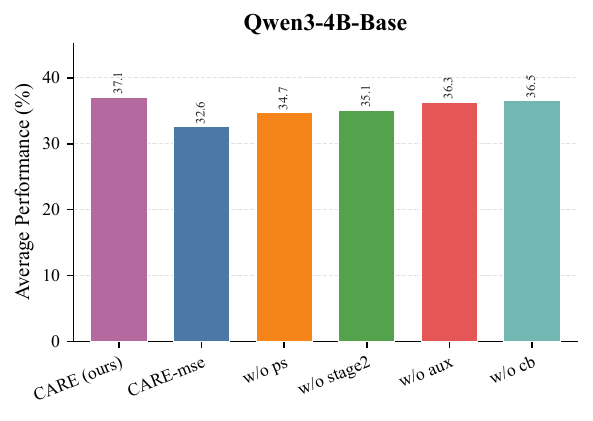}
    \end{minipage}
    \hfill
    \begin{minipage}[t]{0.24\linewidth}
        \centering
        \includegraphics[width=\linewidth]{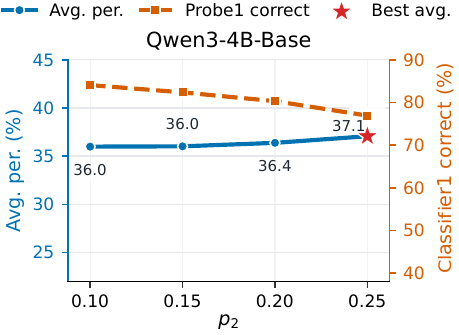}
    \end{minipage}
    \hfill
    \begin{minipage}[t]{0.25\linewidth}
        \centering
        \includegraphics[width=\linewidth]{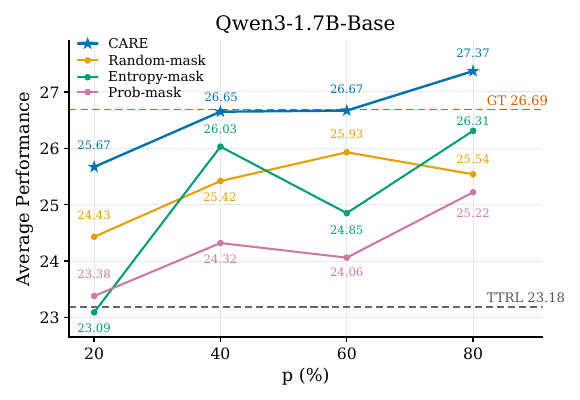}
    \end{minipage}

    \caption{
    \textbf{(a)} Comparison of step training time across different baselines on math tasks.
    \textbf{(b)} Ablation study of CARE on math tasks with Qwen3-4B-Base.
    \textbf{(c)} Hyperparameter analysis of $p_2$ on math tasks with Qwen3-4B-Base.
    \textbf{(d)} Hyperparameter analysis of $p$ on math tasks with Qwen3-1.7B-Base.
    }
    \label{fig:analysis_exp}
\end{figure}
\section{Conclusion}
In this paper, we propose RLAVR, a new setting that actively acquires ground-truth annotations for a subset of samples under a limited annotation budget and integrates the verified labels with pseudo-labels for training, thereby achieving competitive performance with high sample efficiency. We investigate the underlying causes of training collapse in current unsupervised RLVR pipelines and introduce a novel metric, CAG, to quantify the annotation value of individual samples, further substantiating its validity through theoretical analysis. Building upon these insights, we propose CARE, a framework that leverages a cascaded classifier network to predict CAG, perform label selection, and conduct RLAVR training. This mechanism simultaneously stabilizes the training dynamics and enhances overall performance. Extensive empirical evaluations across diverse domains, model scales, and model families unequivocally demonstrate the efficacy of CARE.


\bibliographystyle{plain}
\bibliography{refs}\

@article{shao2024deepseekmath,
  title={Deepseekmath: Pushing the limits of mathematical reasoning in open language models},
  author={Shao, Zhihong and Wang, Peiyi and Zhu, Qihao and Xu, Runxin and Song, Junxiao and Bi, Xiao and Zhang, Haowei and Zhang, Mingchuan and Li, YK and Wu, Yang and others},
  journal={arXiv preprint arXiv:2402.03300},
  year={2024}
}

@article{wang2025beyond,
  title={Beyond Majority Voting: Towards Fine-grained and More Reliable Reward Signal for Test-Time Reinforcement Learning},
  author={Wang, Weiqin and Wang, Yile and Chen, Kehao and Huang, Hui},
  journal={arXiv preprint arXiv:2512.15146},
  year={2025}
}

@article{guo2025deepseek,
  title={Deepseek-r1: Incentivizing reasoning capability in llms via reinforcement learning},
  author={Guo, Daya and Yang, Dejian and Zhang, Haowei and Song, Junxiao and Wang, Peiyi and Zhu, Qihao and Xu, Runxin and Zhang, Ruoyu and Ma, Shirong and Bi, Xiao and others},
  journal={arXiv preprint arXiv:2501.12948},
  year={2025}
}

@article{zuo2025ttrl,
  title={Ttrl: Test-time reinforcement learning},
  author={Zuo, Yuxin and Zhang, Kaiyan and Sheng, Li and Qu, Shang and Cui, Ganqu and Zhu, Xuekai and Li, Haozhan and Zhang, Yuchen and Long, Xinwei and Hua, Ermo and others},
  journal={arXiv preprint arXiv:2504.16084},
  year={2025}
}

@article{zhang2025right,
  title={Right question is already half the answer: Fully unsupervised llm reasoning incentivization},
  author={Zhang, Qingyang and Wu, Haitao and Zhang, Changqing and Zhao, Peilin and Bian, Yatao},
  journal={arXiv preprint arXiv:2504.05812},
  year={2025}
}

@article{zhao2025learning,
  title={Learning to reason without external rewards},
  author={Zhao, Xuandong and Kang, Zhewei and Feng, Aosong and Levine, Sergey and Song, Dawn},
  journal={arXiv preprint arXiv:2505.19590},
  year={2025}
}

@article{agarwal2025unreasonable,
  title={The unreasonable effectiveness of entropy minimization in llm reasoning},
  author={Agarwal, Shivam and Zhang, Zimin and Yuan, Lifan and Han, Jiawei and Peng, Hao},
  journal={arXiv preprint arXiv:2505.15134},
  year={2025}
}

@article{prabhudesai2025maximizing,
  title={Maximizing confidence alone improves reasoning},
  author={Prabhudesai, Mihir and Chen, Lili and Ippoliti, Alex and Fragkiadaki, Katerina and Liu, Hao and Pathak, Deepak},
  journal={arXiv preprint arXiv:2505.22660},
  year={2025}
}

@article{yu2025restrain,
  title={RESTRAIN: From Spurious Votes to Signals--Self-Driven RL with Self-Penalization},
  author={Yu, Zhaoning and Su, Will and Tao, Leitian and Wang, Haozhu and Singh, Aashu and Yu, Hanchao and Wang, Jianyu and Gao, Hongyang and Yuan, Weizhe and Weston, Jason and others},
  journal={arXiv preprint arXiv:2510.02172},
  year={2025}
}

@article{zhang2025consistent,
  title={Consistent paths lead to truth: Self-rewarding reinforcement learning for llm reasoning},
  author={Zhang, Kongcheng and Yao, Qi and Liu, Shunyu and Wang, Yingjie and Lai, Baisheng and Ye, Jieping and Song, Mingli and Tao, Dacheng},
  journal={arXiv preprint arXiv:2506.08745},
  year={2025}
}

@article{ghimire2026prism,
  title={PRISM: A Unified Framework for Post-Training LLMs Without Verifiable Rewards},
  author={Ghimire, Mukesh and Feng, Aosong and You, Liwen and Luo, Youzhi and Liu, Fang and Zhu, Xuan},
  journal={arXiv preprint arXiv:2601.04700},
  year={2026}
}

@article{lu2026contextual,
  title={Contextual Rollout Bandits for Reinforcement Learning with Verifiable Rewards},
  author={Lu, Xiaodong and Wang, Xiaohan and Chai, Jiajun and Yin, Guojun and Lin, Wei and Chen, Zhijun and Luo, Yu and Zhuang, Fuzhen and Ban, Yikun and Wang, Deqing},
  journal={arXiv preprint arXiv:2602.08499},
  year={2026}
}

@article{lin2025awpo,
  title={AWPO: Enhancing Tool-Use of Large Language Models through Adaptive Integration of Reasoning Rewards},
  author={Lin, Zihan and Wang, Xiaohan and Yang, Hexiong and Chai, Jiajun and Cao, Jie and Yin, Guojun and Lin, Wei and He, Ran},
  journal={arXiv preprint arXiv:2512.19126},
  year={2025}
}

@article{lin2025rest,
  title={ResT: Reshaping Token-Level Policy Gradients for Tool-Use Large Language Models},
  author={Lin, Zihan and Wang, Xiaohan and Cao, Jie and Chai, Jiajun and Yin, Guojun and Lin, Wei and He, Ran},
  journal={arXiv preprint arXiv:2509.21826},
  year={2025}
}

@article{yang2025trapo,
  title={TraPO: A Semi-Supervised Reinforcement Learning Framework for Boosting LLM Reasoning},
  author={Yang, Shenzhi and Zhu, Guangcheng and Zheng, Xing and MA, Yingfan and Chen, Zhongqi and Song, Bowen and Wang, Weiqiang and Zhao, Junbo and Chen, Gang and Wang, Haobo},
  journal={arXiv preprint arXiv:2512.13106},
  year={2025}
}

@article{he2026far,
  title={How Far Can Unsupervised RLVR Scale LLM Training?},
  author={He, Bingxiang and Zuo, Yuxin and Liu, Zeyuan and Zhao, Shangziqi and Fu, Zixuan and Yang, Junlin and Qian, Cheng and Zhang, Kaiyan and Fan, Yuchen and Cui, Ganqu and others},
  journal={arXiv preprint arXiv:2603.08660},
  year={2026}
}

@article{tan2025diagnosing,
  title={Diagnosing and Mitigating System Bias in Self-Rewarding RL},
  author={Tan, Chuyi and Yuan, Peiwen and Wang, Xinglin and Li, Yiwei and Feng, Shaoxiong and Zhang, Yueqi and Shi, Jiayi and Zhang, Ji and Pan, Boyuan and Hu, Yao and others},
  journal={arXiv preprint arXiv:2510.08977},
  year={2025}
}

@article{zhou2025evolving,
  title={Evolving language models without labels: Majority drives selection, novelty promotes variation},
  author={Zhou, Yujun and Liang, Zhenwen and Liu, Haolin and Yu, Wenhao and Panaganti, Kishan and Song, Linfeng and Yu, Dian and Zhang, Xiangliang and Mi, Haitao and Yu, Dong},
  journal={arXiv preprint arXiv:2509.15194},
  year={2025}
}

@article{pan2026coverrl,
  title={CoVerRL: Breaking the Consensus Trap in Label-Free Reasoning via Generator-Verifier Co-Evolution},
  author={Pan, Teng and Yan, Yuchen and Wang, Zixuan and Zhang, Ruiqing and Han, Gaiyang and Zhang, Wanqi and Lu, Weiming and Xiao, Jun and Shen, Yongliang},
  journal={arXiv preprint arXiv:2603.17775},
  year={2026}
}

@article{luo2026memreward,
  title={MemReward: Graph-Based Experience Memory for LLM Reward Prediction with Limited Labels},
  author={Luo, Tianyang and Feng, Tao and Hua, Zhigang and Xie, Yan and Yang, Shuang and Liu, Ge and You, Jiaxuan},
  journal={arXiv preprint arXiv:2603.19310},
  year={2026}
}

@article{yi2026learn,
  title={Learn More with Less: Uncertainty Consistency Guided Query Selection for RLVR},
  author={Yi, Hao and Hu, Yulan and Li, Xin and Ouyang, Sheng and Ding, Lizhong and Liu, Yong},
  journal={arXiv preprint arXiv:2601.22595},
  year={2026}
}

@article{ji2024reinforcement,
  title={Reinforcement learning from human feedback with active queries},
  author={Ji, Kaixuan and He, Jiafan and Gu, Quanquan},
  journal={arXiv preprint arXiv:2402.09401},
  year={2024}
}

@inproceedings{feng2025duo,
  title={DUO: Diverse, Uncertain, On-Policy Query Generation and Selection for Reinforcement Learning from Human Feedback},
  author={Feng, Xuening and Jiang, Zhaohui and Kaufmann, Timo and Xu, Puchen and H{\"u}llermeier, Eyke and Weng, Paul and Zhu, Yifei},
  booktitle={Proceedings of the AAAI Conference on Artificial Intelligence},
  volume={39},
  number={16},
  pages={16604--16612},
  year={2025}
}

@article{duan2025efficient,
  title={Efficient process reward model training via active learning},
  author={Duan, Keyu and Liu, Zichen and Mao, Xin and Pang, Tianyu and Chen, Changyu and Chen, Qiguang and Shieh, Michael Qizhe and Dou, Longxu},
  journal={arXiv preprint arXiv:2504.10559},
  year={2025}
}

@incollection{aggarwal2014active,
  title={Active learning: A survey},
  author={Aggarwal, Charu C and Kong, Xiangnan and Gu, Quanquan and Han, Jiawei and Yu, Philip S},
  booktitle={Data classification},
  pages={599--634},
  year={2014},
  publisher={Chapman and Hall/CRC}
}

@article{bai2022training,
  title={Training a helpful and harmless assistant with reinforcement learning from human feedback},
  author={Bai, Yuntao and Jones, Andy and Ndousse, Kamal and Askell, Amanda and Chen, Anna and DasSarma, Nova and Drain, Dawn and Fort, Stanislav and Ganguli, Deep and Henighan, Tom and others},
  journal={arXiv preprint arXiv:2204.05862},
  year={2022}
}

@inproceedings{wang2014new,
  title={A new active labeling method for deep learning},
  author={Wang, Dan and Shang, Yi},
  booktitle={2014 International joint conference on neural networks (IJCNN)},
  pages={112--119},
  year={2014},
  organization={IEEE}
}

@article{geifman2017deep,
  title={Deep active learning over the long tail},
  author={Geifman, Yonatan and El-Yaniv, Ran},
  journal={arXiv preprint arXiv:1711.00941},
  year={2017}
}

@article{rafailov2023direct,
  title={Direct preference optimization: Your language model is secretly a reward model},
  author={Rafailov, Rafael and Sharma, Archit and Mitchell, Eric and Manning, Christopher D and Ermon, Stefano and Finn, Chelsea},
  journal={Advances in neural information processing systems},
  volume={36},
  pages={53728--53741},
  year={2023}
}

@article{liu2024dual,
  title={Dual active learning for reinforcement learning from human feedback},
  author={Liu, Pangpang and Shi, Chengchun and Sun, Will Wei},
  journal={arXiv preprint arXiv:2410.02504},
  year={2024}
}

@article{mehta2023sample,
  title={Sample efficient reinforcement learning from human feedback via active exploration},
  author={Mehta, Viraj and Das, Vikramjeet and Neopane, Ojash and Dai, Yijia and Bogunovic, Ilija and Schneider, Jeff and Neiswanger, Willie},
  year={2023}
}

@article{kee2018query,
  title={Query-by-committee improvement with diversity and density in batch active learning},
  author={Kee, Seho and Del Castillo, Enrique and Runger, George},
  journal={Information Sciences},
  volume={454},
  pages={401--418},
  year={2018},
  publisher={Elsevier}
}

@inproceedings{cai2013maximizing,
  title={Maximizing expected model change for active learning in regression},
  author={Cai, Wenbin and Zhang, Ya and Zhou, Jun},
  booktitle={2013 IEEE 13th international conference on data mining},
  pages={51--60},
  year={2013},
  organization={IEEE}
}

@article{mussmann2022active,
  title={Active learning with expected error reduction},
  author={Mussmann, Stephen and Reisler, Julia and Tsai, Daniel and Mousavi, Ehsan and O'Brien, Shayne and Goldszmidt, Moises},
  journal={arXiv preprint arXiv:2211.09283},
  year={2022}
}

@inproceedings{wang2021density,
  title={Density weighted diversity based query strategy for active learning},
  author={Wang, Tingting and Zhao, Xufeng and Lv, Qiujian and Hu, Bo and Sun, Degang},
  booktitle={2021 IEEE 24th International Conference on Computer Supported Cooperative Work in Design (CSCWD)},
  pages={156--161},
  year={2021},
  organization={IEEE}
}

@article{bayer2026activellm,
  title={Activellm: Large language model-based active learning for textual few-shot scenarios},
  author={Bayer, Markus and Lutz, Justin and Reuter, Christian},
  journal={Transactions of the Association for Computational Linguistics},
  volume={14},
  pages={1--22},
  year={2026},
  publisher={MIT Press 255 Main Street, 9th Floor, Cambridge, Massachusetts 02142, USA~…}
}

@inproceedings{kwon2023efficient,
  title={Efficient memory management for large language model serving with pagedattention},
  author={Kwon, Woosuk and Li, Zhuohan and Zhuang, Siyuan and Sheng, Ying and Zheng, Lianmin and Yu, Cody Hao and Gonzalez, Joseph and Zhang, Hao and Stoica, Ion},
  booktitle={Proceedings of the 29th symposium on operating systems principles},
  pages={611--626},
  year={2023}
}

@article{sheng2024hybridflow,
  title   = {HybridFlow: A Flexible and Efficient RLHF Framework},
  author  = {Guangming Sheng and Chi Zhang and Zilingfeng Ye and Xibin Wu and Wang Zhang and Ru Zhang and Yanghua Peng and Haibin Lin and Chuan Wu},
  year    = {2024},
  journal = {arXiv preprint arXiv: 2409.19256}
}

@article{yang2025qwen3,
  title={Qwen3 technical report},
  author={Yang, An and Li, Anfeng and Yang, Baosong and Zhang, Beichen and Hui, Binyuan and Zheng, Bo and Yu, Bowen and Gao, Chang and Huang, Chengen and Lv, Chenxu and others},
  journal={arXiv preprint arXiv:2505.09388},
  year={2025}
}

@article{yu2025dapo,
  title={Dapo: An open-source llm reinforcement learning system at scale},
  author={Yu, Qiying and Zhang, Zheng and Zhu, Ruofei and Yuan, Yufeng and Zuo, Xiaochen and Yue, Yu and Dai, Weinan and Fan, Tiantian and Liu, Gaohong and Liu, Lingjun and others},
  journal={arXiv preprint arXiv:2503.14476},
  year={2025}
}

@article{hendrycks2021measuring,
  title={Measuring mathematical problem solving with the math dataset},
  author={Hendrycks, Dan and Burns, Collin and Kadavath, Saurav and Arora, Akul and Basart, Steven and Tang, Eric and Song, Dawn and Steinhardt, Jacob},
  journal={arXiv preprint arXiv:2103.03874},
  year={2021}
}

@inproceedings{he2024olympiadbench,
  title={Olympiadbench: A challenging benchmark for promoting agi with olympiad-level bilingual multimodal scientific problems},
  author={He, Chaoqun and Luo, Renjie and Bai, Yuzhuo and Hu, Shengding and Thai, Zhen and Shen, Junhao and Hu, Jinyi and Han, Xu and Huang, Yujie and Zhang, Yuxiang and others},
  booktitle={Proceedings of the 62nd Annual Meeting of the Association for Computational Linguistics (Volume 1: Long Papers)},
  pages={3828--3850},
  year={2024}
}

@article{guan2025rstar,
  title={RStar-math: Small LLMs can master math reasoning with self-evolved deep thinking},
  author={Guan, Xinyu and Zhang, Li Lyna and Liu, Yifei and Shang, Ning and Sun, Youran and Zhu, Yi and Yang, Fan and Yang, Mao},
  journal={arXiv preprint arXiv:2501.04519},
  year={2025}
}

@article{wang2026survey,
  title={A survey on large language models for mathematical reasoning},
  author={Wang, Peng-Yuan and Liu, Tian-Shuo and Wang, Chenyang and Li, Ziniu and Wang, Yidi and Yan, Shu and Jia, Chengxing and Liu, Xu-Hui and Chen, Xinwei and Xu, Jiacheng and others},
  journal={ACM Computing Surveys},
  volume={58},
  number={8},
  pages={1--35},
  year={2026},
  publisher={ACM New York, NY}
}

@inproceedings{yang2025code,
  title={Code to think, think to code: A survey on code-enhanced reasoning and reasoning-driven code intelligence in llms},
  author={Yang, Dayu and Liu, Tianyang and Zhang, Daoan and Simoulin, Antoine and Liu, Xiaoyi and Cao, Yuwei and Teng, Zhaopu and Qian, Xin and Yang, Grey and Luo, Jiebo and others},
  booktitle={Proceedings of the 2025 Conference on Empirical Methods in Natural Language Processing},
  pages={2586--2616},
  year={2025}
}

@article{chen2025acereason,
  title={Acereason-nemotron: Advancing math and code reasoning through reinforcement learning},
  author={Chen, Yang and Yang, Zhuolin and Liu, Zihan and Lee, Chankyu and Xu, Peng and Shoeybi, Mohammad and Catanzaro, Bryan and Ping, Wei},
  journal={arXiv preprint arXiv:2505.16400},
  year={2025}
}

@article{zhang2025co,
  title={Co-rewarding: Stable Self-supervised RL for Eliciting Reasoning in Large Language Models},
  author={Zhang, Zizhuo and Zhu, Jianing and Ge, Xinmu and Zhao, Zihua and Zhou, Zhanke and Li, Xuan and Feng, Xiao and Yao, Jiangchao and Han, Bo},
  journal={arXiv preprint arXiv:2508.00410},
  year={2025}
}

@article{liao2026tool,
  title={Tool Verification for Test-Time Reinforcement Learning},
  author={Liao, Ruotong and R{\"o}hrich, Nikolai and Wang, Xiaohan and Zhang, Yuhui and Samadzadeh, Yasaman and Tresp, Volker and Yeung-Levy, Serena},
  journal={arXiv preprint arXiv:2603.02203},
  year={2026}
}

@article{zhang2025reasoning,
  title={Reasoning Models Know When They're Right: Probing Hidden States for Self-Verification},
  author={Zhang, Anqi and Chen, Yulin and Pan, Jane and Zhao, Chen and Panda, Aurojit and Li, Jinyang and He, He},
  journal={arXiv preprint arXiv:2504.05419},
  year={2025}
}

@inproceedings{azaria2023internal,
  title={The internal state of an LLM knows when it’s lying},
  author={Azaria, Amos and Mitchell, Tom},
  booktitle={Findings of the Association for Computational Linguistics: EMNLP 2023},
  pages={967--976},
  year={2023}
}

@inproceedings{manakul2023selfcheckgpt,
  title={Selfcheckgpt: Zero-resource black-box hallucination detection for generative large language models},
  author={Manakul, Potsawee and Liusie, Adian and Gales, Mark},
  booktitle={Proceedings of the 2023 conference on empirical methods in natural language processing},
  pages={9004--9017},
  year={2023}
}

@article{su2025between,
  title={Between underthinking and overthinking: An empirical study of reasoning length and correctness in llms},
  author={Su, Jinyan and Healey, Jennifer and Nakov, Preslav and Cardie, Claire},
  journal={arXiv preprint arXiv:2505.00127},
  year={2025}
}

@article{abdin2024phi,
  title={Phi-4 technical report},
  author={Abdin, Marah and Aneja, Jyoti and Behl, Harkirat and Bubeck, S{\'e}bastien and Eldan, Ronen and Gunasekar, Suriya and Harrison, Michael and Hewett, Russell J and Javaheripi, Mojan and Kauffmann, Piero and others},
  journal={arXiv preprint arXiv:2412.08905},
  year={2024}
}

@inproceedings{xie2025memorization,
  title={On memorization of large language models in logical reasoning},
  author={Xie, Chulin and Huang, Yangsibo and Zhang, Chiyuan and Yu, Da and Chen, Xinyun and Lin, Bill Yuchen and Li, Bo and Ghazi, Badih and Kumar, Ravi},
  booktitle={Proceedings of the 14th International Joint Conference on Natural Language Processing and the 4th Conference of the Asia-Pacific Chapter of the Association for Computational Linguistics},
  pages={2742--2785},
  year={2025}
}
\clearpage
\appendix
\section{Proofs}
\subsection{Proof \lemmaref{lemma:transformed_cosine_similarity}} \label{appendix:proof_of_lemma}
\begin{proof}
Under the strict on-policy setting, the gradients induced by the ground-truth and pseudo-reward advantages can be written as
\begin{equation}
g_x=\frac{1}{G}SA,
\qquad
\hat g_x=\frac{1}{G}S\hat A.
\end{equation}
Therefore,
\begin{equation}
\langle g_x,\hat g_x\rangle
=
\frac{1}{G^2}A^\top S^\top S\hat A
=
\frac{1}{G^2}A^\top K_x\hat A.
\end{equation}
Similarly, their norms satisfy
\begin{equation}
\|g_x\|_2
=
\frac{1}{G}\sqrt{A^\top K_xA},
\qquad
\|\hat g_x\|_2
=
\frac{1}{G}\sqrt{\hat A^\top K_x\hat A}.
\end{equation}
Substituting these identities into the definition of cosine similarity gives
\begin{equation}
\cos\theta_x
=
\frac{
A^\top K_x\hat A
}{
\sqrt{A^\top K_xA}\sqrt{\hat A^\top K_x\hat A}
}.
\end{equation}
Since $K_x\succ0$, its unique positive definite square root $K_x^{1/2}$ exists. Hence,
\begin{equation}
A^\top K_x\hat A
=
\left\langle K_x^{1/2}A,K_x^{1/2}\hat A\right\rangle,
\end{equation}
and
\begin{equation}
A^\top K_xA
=
\|K_x^{1/2}A\|_2^2,
\qquad
\hat A^\top K_x\hat A
=
\|K_x^{1/2}\hat A\|_2^2.
\end{equation}
Thus,
\begin{equation}
\cos\theta_x
=
\frac{\left\langle K_x^{1/2}A,K_x^{1/2}\hat A\right\rangle}
{\|K_x^{1/2}A\|_2\|K_x^{1/2}\hat A\|_2}.
\end{equation}
This completes the proof.
\end{proof}

\subsection{Proof \theoremref{theorem:CAG_upperbound_gradient_alignment}} \label{appendix:proof_of_theorem}
\begin{proof}
Since $A$ and $\hat A$ are group-normalized and centered, we have
$A,\hat A\in\mathcal H$ and
\begin{equation}
\|A\|_2^2=\|\hat A\|_2^2=G.
\end{equation}
Let
\begin{equation}
u=\frac{A}{\sqrt G},
\qquad
v=\frac{\hat A}{\sqrt G}.
\end{equation}
Then $u,v\in\mathcal H$ and $\|u\|_2=\|v\|_2=1$. Define
\begin{equation}
\gamma=u^\top v.
\end{equation}
Since $d_x=\|A-\hat A\|_2$, we have
\begin{equation}
d_x^2
=
\|A-\hat A\|_2^2
=
\|A\|_2^2+\|\hat A\|_2^2-2A^\top \hat A
=
2G(1-\gamma).
\end{equation}
Therefore,
\begin{equation}
\gamma
=
1-\frac{d_x^2}{2G}.
\end{equation}

By Lemma~\ref{lemma:transformed_cosine_similarity}, the gradient alignment is
\begin{equation}
\cos\theta_x
=
\frac{u^\top K_xv}
{\sqrt{u^\top K_xu}\sqrt{v^\top K_xv}}.
\end{equation}
Since multiplying $K_x$ by a positive scalar does not change the above cosine, we normalize $K_x$ on $\mathcal H$ so that its eigenvalues lie in $[1,\kappa(x)]$.

If $\gamma=-1$, then $v=-u$, hence $\cos\theta_x=-1$. In this case $d_x^2=4G$, and the claimed bound also equals $-1$. Therefore, it remains to consider $-1<\gamma<1$.

Let
\begin{equation}
c=\sqrt{\frac{1+\gamma}{2}},
\qquad
h=\sqrt{\frac{1-\gamma}{2}}.
\end{equation}
Define
\begin{equation}
p=\frac{u+v}{2c},
\qquad
q=\frac{u-v}{2h}.
\end{equation}
Then $p,q\in\mathcal H$ are orthonormal, and
\begin{equation}
u=cp+hq,
\qquad
v=cp-hq.
\end{equation}

It suffices to consider the two-dimensional subspace spanned by $p$ and $q$. Let
\begin{equation}
Q=[p,q]\in\mathbb R^{G\times 2},
\qquad
B=Q^\top K_x Q\in\mathbb R^{2\times 2}
\end{equation}
be the compression of $K_x$ onto this subspace. Since $B$ is symmetric positive definite, let its eigenvalues be $\mu\ge \nu>0$ and define
$\ell=\mu/\nu$. By the Rayleigh quotient characterization, we have
$1\le \ell\le \kappa(x)$. Since multiplying $B$ by a positive scalar does not change the cosine, we can normalize $\nu=1$. Therefore, up to a rotation, this compression can be written as
\begin{equation}
B=
R_\phi
\begin{pmatrix}
\ell & 0\\
0 & 1
\end{pmatrix}
R_\phi^\top,
\qquad
R_\phi=
\begin{pmatrix}
\cos\phi & -\sin\phi\\
\sin\phi & \cos\phi
\end{pmatrix}.
\end{equation}

Let $z=\cos 2\phi$. A direct calculation gives
\begin{equation}
F_\ell(z)
=
\frac{u^\top K_xv}{\sqrt{u^\top K_xu}\sqrt{v^\top K_xv}}
=
\frac{\gamma+\frac{\ell-1}{2}(\gamma+z)}{
\sqrt{1+(\ell-1)(1+\gamma z)+\frac{(\ell-1)^2}{4}(\gamma+z)^2}
}.
\end{equation}
Moreover,
\begin{equation}
\frac{\partial F_\ell(z)}{\partial z}
=
\frac{\ell(\ell-1)(1-\gamma^2)}{2\left[
1+(\ell-1)(1+\gamma z)
+\frac{(\ell-1)^2}{4}(\gamma+z)^2
\right]^{3/2}}
\ge 0.
\end{equation}
Thus, for fixed $\ell$, the maximum is attained at $z=1$. This gives
\begin{equation}
F_\ell(z)
\le
F_\ell(1)
=
\frac{
\ell(1+\gamma)-(1-\gamma)
}{
\ell(1+\gamma)+(1-\gamma)
}.
\end{equation}
Since
\begin{equation}
\frac{\partial}{\partial \ell}F_\ell(1)
=
\frac{
2(1-\gamma^2)
}{
\left[\ell(1+\gamma)+(1-\gamma)\right]^2
}
\ge 0,
\end{equation}
the maximum over all admissible $\ell$ is attained at $\ell=\kappa(x)$. Therefore,
\begin{equation}
\cos\theta_x
\le
\frac{
\kappa(x)(1+\gamma)-(1-\gamma)
}{
\kappa(x)(1+\gamma)+(1-\gamma)
}.
\end{equation}
Substituting $\gamma=1-d_x^2/(2G)$ yields
\begin{equation}
\cos\theta_x
\le
\frac{
4\kappa(x)G-(\kappa(x)+1)d_x^2
}{
4\kappa(x)G-(\kappa(x)-1)d_x^2
}.
\end{equation}
Equivalently,
\begin{equation}
\cos\theta_x
\le
1-
\frac{
2d_x^2
}{
4\kappa(x)G-(\kappa(x)-1)d_x^2
}
=
1-
\frac{
2
}{
4\kappa(x)\frac{G}{d_x^2}-(\kappa(x)-1)
}.
\end{equation}
This completes the proof.
\end{proof}

\section{Experimental Details}
\label{appendix:exp_details}
\subsection{Datasets for Main Experiments}
\label{appendix:dataset_details}
We summarize the number of samples in the training and evaluation datasets in Table~\ref{tab:datasets_statics}. 

\begin{table}[t]
\centering
\caption{Statistics of the training and test datasets used by all methods.}
\small
\setlength{\tabcolsep}{4pt}
\renewcommand{\arraystretch}{1.1}
\begin{tabular}{lll}
\toprule
\textbf{Dataset} & \textbf{Description
} & \textbf{\# Train / Test} \\
\midrule
DAPO-17k            & Math Reasoning     & 17398 / - \\
AIME24   & Math Reasoning     & - / 30 \\
AIME25   & Math Reasoning     & - / 30 \\
MATH500            & Math Reasoning     & - / 500 \\
AMC23            & Math Reasoning     & - / 40 \\
Hmmt25          & Math Reasoning     & - / 30 \\
Olympiad           & Math Reasoning     & - / 674 \\
K\&K train & Logic Reasoning     & 4500 / - \\
K\&K 3ppl & Logic Reasoning     & - / 100 \\
K\&K 4ppl & Logic Reasoning     & - / 100 \\
K\&K 5ppl & Logic Reasoning     & - / 100 \\
K\&K 6ppl & Logic Reasoning     & - / 100 \\
K\&K 7ppl & Logic Reasoning     & - / 100 \\
K\&K 8ppl & Logic Reasoning     & - / 100 \\
\bottomrule
\end{tabular}
\label{tab:datasets_statics}
\end{table}

\subsection{Implementation Details}
\label{appendix:implementation_details}
\paragraph{Framework Implementation.} Our framework is implemented on top of VeRL~\cite{sheng2024hybridflow} and URLVR~\cite{he2026far}. All experiments were conducted on 8$\times$H20 GPUs with 141GB of memory, using Python 3.10 and PyTorch 2.6. All models are optimized with the AdamW optimizer (\(\beta_1 = 0.9\), \(\beta_2 = 0.95\), weight decay 0.01) and accelerated via vLLM~\cite{kwon2023efficient}. We use a batch size of 64, a mini-batch size of 32, a maximum prompt length of 1024 tokens, a maximum response length of 4000 tokens, a learning rate of ($1\times10^{-6}$), and 8 GRPO rollouts. We train for 2 epochs on the DAPO-17k dataset~\cite{yu2025dapo} and 3 epochs on the K\&K dataset~\cite{xie2025memorization}. The KL loss is not used. For Phi4-mini-instruct, we set both the batch size and mini-batch size to 64 to stabilize GRPO training and ensure on-policy updates. For K\&K tasks, we find that GRPO training is unstable without regularization; therefore, we use the KL loss with its coefficient set to 0.01.

\paragraph{CARE and Baseline Implementation Details.} For the Entropy method, we first cluster the answer distribution and then compute the response entropy, selecting samples with the highest entropy for ground-truth annotation. For the Prob method, we compute the average probability of the response tokens and select samples with the lowest probability for ground-truth annotation. As shown in Table~\ref{tab:classifier_representation}, the CARE classifiers are instantiated as a two-stage MLP architecture with separate encoders for prompt- and response-level features. Across all experiments, we set the prompt encoder hidden dimension to 128, the prediction-head hidden dimension to 256, and the response encoder hidden/output dimensions to 64/512, with ReLU activations. The first-stage classifier includes a rollout-level auxiliary head with loss weight 1.5. The second-stage classifier outputs (G+1) logits for count prediction, where (G=8) is the number of rollouts per prompt; invalid count classes are masked before loss computation. We optimize the classifiers with AdamW using a learning rate of 1e-4. To stabilize online classifier learning, we maintain separate FIFO replay buffers for the two stages, each with a capacity of 2048, and mix 16 historical samples into each classifier update. For the second-stage classifier, historical samples are drawn using count-label stratification to mitigate class imbalance. In addition, we enable class-balanced losses, with class weights computed from batch-level class frequencies using power 0.5 and clipped to ([0.25,4.0]). For first-stage selection, the class-1 selection ratio $p_2$ for different models on math and K\&K tasks is shown in Table~\ref{tab:p2_settings}. In practice, this ratio can be adjusted based on the model capability and the dataset difficulty.

\begin{table}[t]
\centering
\caption{Classifier features used by the acquisition classifier.}
\label{tab:classifier_representation}
\small
\setlength{\tabcolsep}{4pt}
\begin{tabular}{p{0.25\linewidth} p{0.25\linewidth} p{0.42\linewidth}}
\toprule
\textbf{Representation} & \textbf{Feature} & \textbf{Definition} \\
\midrule
\multirow{3}{*}{
\begin{tabular}[c]{@{}l@{}}
Global representation \\
$\phi_i^{\mathrm{global}}$
\end{tabular}
}
&
Prompt representation
&
$h_i^{x}=\mathrm{Enc}(x_i)_{[-1]}\in\mathbb{R}^{d}$
\\
&
Valid-rollout ratio
&
$f_i^{\mathrm{valid}}=\frac{1}{n}\sum_{j=1}^{n}\mathbf{1}[o_{i,j}\neq \texttt{None}]$
\\
&
Cluster distribution
&
$f_i^{\mathrm{cluster}}=\mathrm{Sort}([p_{i,1},\ldots,p_{i,C}])$, where
$p_{i,c}=\frac{n_{i,c}}{\sum_{c'=1}^{C}n_{i,c'}}$
\\
\midrule
\multirow{4}{*}{
\begin{tabular}[c]{@{}l@{}}
Response-level \\
representation \\
$\Phi_i^{\mathrm{ans}}$
\end{tabular}
}
&
Response representation
&
$h_{i,g}^{\mathrm{ans}}=\mathrm{Enc}(x_i,o_{i,g})_{[-1]}\in\mathbb{R}^{d}$
\\
&
Normalized length
&
$\ell_{i,g}^{\mathrm{norm}}=|o_{i,g}|/L_{\max}$
\\
&
Response-level feature
&
$\phi_{i,g}^{\mathrm{ans}}=[h_{i,g}^{\mathrm{ans}};\ell_{i,g}^{\mathrm{norm}}]$
\\
&
Stacked response features
&
$\Phi_i^{\mathrm{ans}}=[\phi_{i,1}^{\mathrm{ans}};\ldots;\phi_{i,n}^{\mathrm{ans}}]$
\\
\bottomrule
\end{tabular}
\end{table}

\begin{table}[t]
\centering
\caption{Class-1 selection ratio $p_2$ used in the first-stage selection for different models and tasks.}
\label{tab:p2_settings}
\begin{tabular}{lcccc}
\toprule
Task & Qwen3-1.7B-Base & Qwen3-4B-Base & Qwen3-8B-Base & Phi4-mini-instruct \\
\midrule
Math & 0.10 & 0.25 & 0.25 & 0.25 \\
K\&K & 0.00 & 0.25 & 0.25 & 0.15 \\
\bottomrule
\end{tabular}
\end{table}

\section{More Experimental Results}
\subsection{Comparison of Computational Cost}
\label{appendix:train_time}
To compare the training overhead of different methods, we evaluate their total training cost using Qwen3-4B-Base as the backbone model on an 8$\times$H20 GPU setup with 141 GB of total memory, as summarized in Table~\ref{tab:time_cost}. For a fair comparison, during policy optimization, we do not discard the samples that CARE would otherwise exclude from training; instead, we set their advantages to zero. GT is trained on the full dataset and exhibits a substantial increase in average output length, resulting in a longer training time. Compared with the Random baseline, CARE incurs only an 8.25\% increase in training time while achieving substantial performance gains. This modest overhead stems from the lightweight design of its classifier network, which consists of only a small MLP. In addition, the total training time could be further reduced by excluding samples discarded after the second-stage classifier selection from policy optimization.

\begin{table}[t]
\caption{Total training time (hours) for various methods on math tasks with Qwen3-4B-Base.}
\label{tab:time_cost}
\centering
\small
\setlength{\tabcolsep}{4pt}
\renewcommand{\arraystretch}{1.1}
\begin{tabular}{lccc}
\toprule
\textbf{Method} & \textbf{Training} \\
\midrule
GT & 28.70 \\
Random-mask & 20.48 \\
Prob-mask & 20.43 \\
Entropy-mask & 22.13 \\
Oracle-mask & 21.51 \\
CARE (ours) & 22.17 \\
\bottomrule
\end{tabular}
\end{table}

\subsection{Additional Collapse Analysis Experiments}
\label{appendix:more_collapse_pre_exp}
For TTRL, we generate training labels from majority-voted responses. For the TTRL-mask variant, we compare the voted answers with the ground-truth labels and mask samples with incorrectly voted labels by assigning them a sample weight of zero before training. All other hyperparameters are kept the same as in the main experiments. In addition, we provide results on the Phi4-mini-instruct model in Figure~\ref{fig:phi_pre_exp} (Left), which exhibit trends consistent with those observed on the Qwen3 model. We also evaluate on K\&K tasks, with results for the Qwen3 and Phi4-mini-instruct models shown in Figure~\ref{fig:qwen_pre1_kk} and Figure~\ref{fig:phi_pre_exp_kk} (Left), respectively. The conclusions are consistent with those on math tasks.

\begin{figure}[t]
    \centering
    \begin{minipage}[t]{0.28\linewidth}
        \centering
        \includegraphics[width=\linewidth]{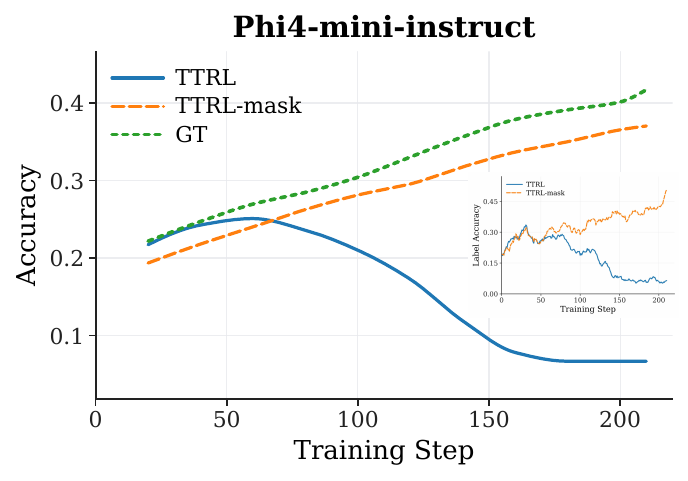}
    \end{minipage}
    \begin{minipage}[t]{0.28\linewidth}
        \centering
        \includegraphics[width=\linewidth]{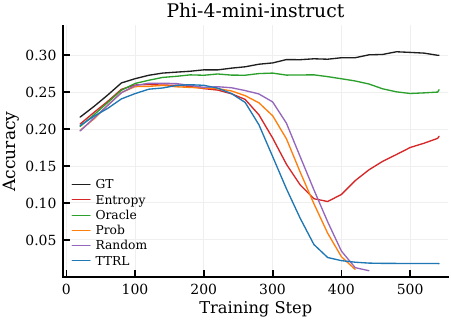}
    \end{minipage}
    \begin{minipage}[t]{0.28\linewidth}
        \centering
        \includegraphics[width=\linewidth]{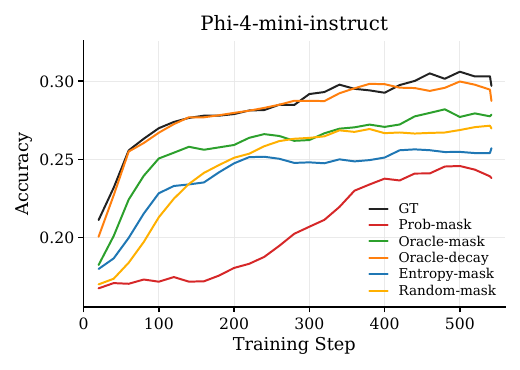}
    \end{minipage}
    \caption{\textbf{(Left)} Training dynamics of Phi4-mini-instruct on math tasks. Comparison of label acquisition strategies on math-task pseudo-labeled samples without masking \textbf{(Middle)} and with masking \textbf{(Right)}.}
    \label{fig:phi_pre_exp}
\end{figure}

\begin{figure}[t]
    \centering
    \begin{minipage}[t]{0.28\linewidth}
        \centering
        \includegraphics[width=\linewidth]{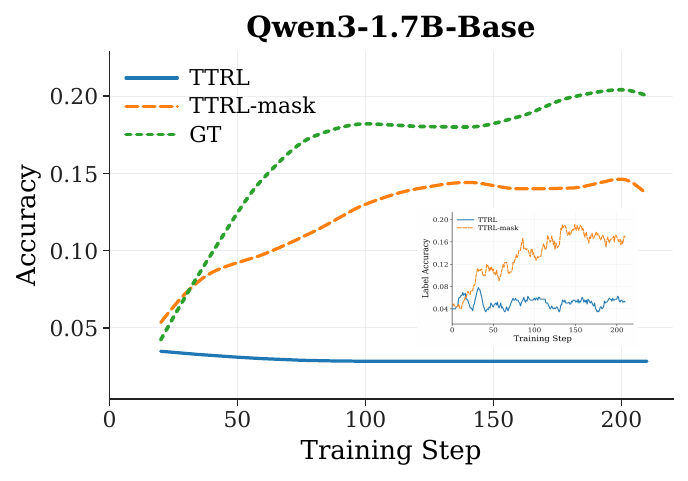}
    \end{minipage}
    \begin{minipage}[t]{0.28\linewidth}
        \centering
        \includegraphics[width=\linewidth]{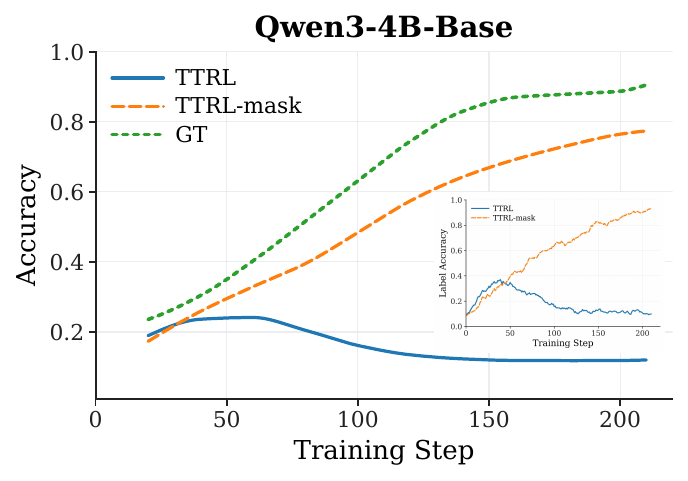}
    \end{minipage}
    \begin{minipage}[t]{0.28\linewidth}
        \centering
        \includegraphics[width=\linewidth]{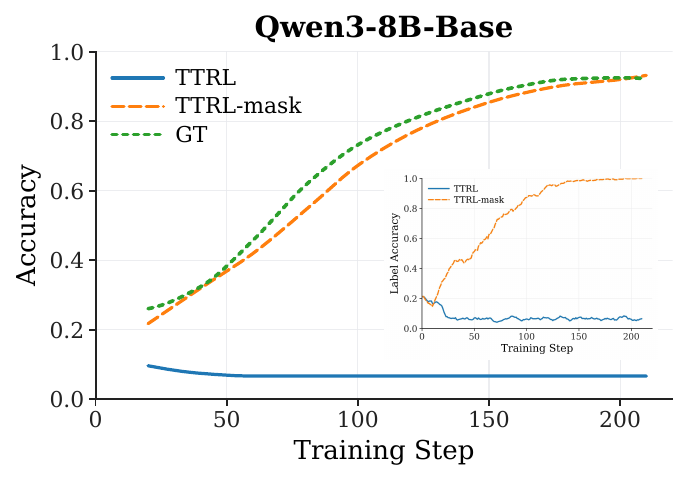}
    \end{minipage}
    \caption{Training dynamics across three Qwen3 models on K\&K tasks.}
    \label{fig:qwen_pre1_kk}
\end{figure}

\begin{figure}[t]
    \centering
    \begin{minipage}[t]{0.32\linewidth}
        \centering
        \includegraphics[width=\linewidth]{fig/phi_pre1_kk.pdf}
    \end{minipage}
    \begin{minipage}[t]{0.28\linewidth}
        \centering
        \includegraphics[width=\linewidth]{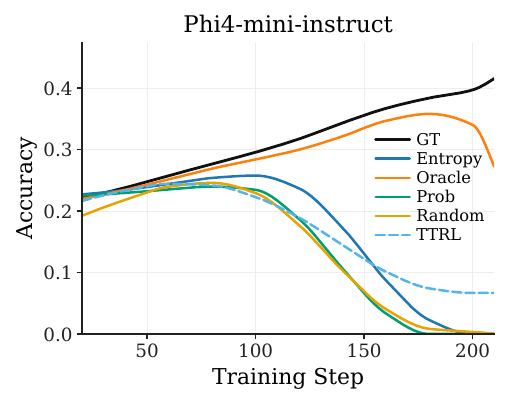}
    \end{minipage}
    \begin{minipage}[t]{0.28\linewidth}
        \centering
        \includegraphics[width=\linewidth]{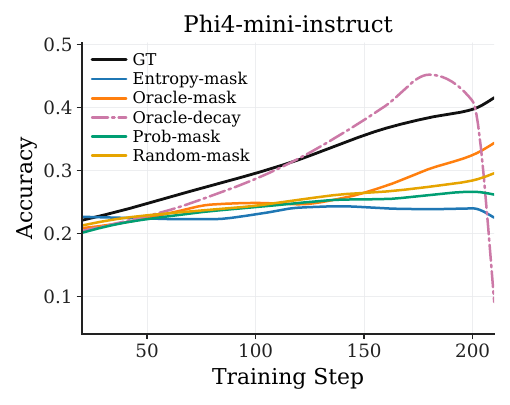}
    \end{minipage}
    \caption{\textbf{(Left)} Training dynamics of Phi4-mini-instruct on K\&K tasks. 
    Comparison of label acquisition strategies on K\&K pseudo-labeled samples without masking \textbf{(Middle)} and with masking \textbf{(Right)}.}
    \label{fig:phi_pre_exp_kk}
\end{figure}

\subsection{Additional Preliminary Experimental Results}
\label{appendix:pre_mask_exp}
Due to the definition of the CAG metric, correctly pseudo-labeled samples have a CAG value of zero. As a result, the Oracle strategy, which selects samples based on CAG, is more likely to acquire labels from incorrectly pseudo-labeled samples. To ensure a fair comparison, for the Random, Entropy, and Prob strategies, we also select labels only from incorrectly pseudo-labeled samples according to their respective selection criteria. The results on Phi4-mini-instruct for math and K\&K tasks without masking pseudo-labeled samples are shown in Figure~\ref{fig:phi_pre_exp} (Middle) and Figure~\ref{fig:phi_pre_exp_kk} (Middle), respectively. The results on Qwen3 for K\&K tasks are shown in Figure~\ref{fig:qwen_pre2_kk} (Left). Across different models and tasks, we observe consistent conclusions: the Oracle strategy substantially alleviates collapse, validating the effectiveness of the CAG metric for mitigating collapse in unsupervised training.

\begin{figure}[t]
    \centering
    \begin{minipage}[t]{0.16\linewidth}
        \centering
        \includegraphics[width=\linewidth]{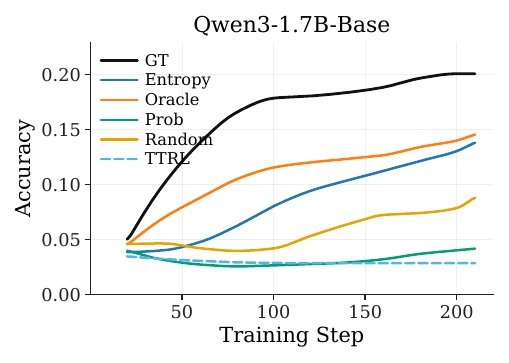}
    \end{minipage}
    \begin{minipage}[t]{0.16\linewidth}
        \centering
        \includegraphics[width=\linewidth]{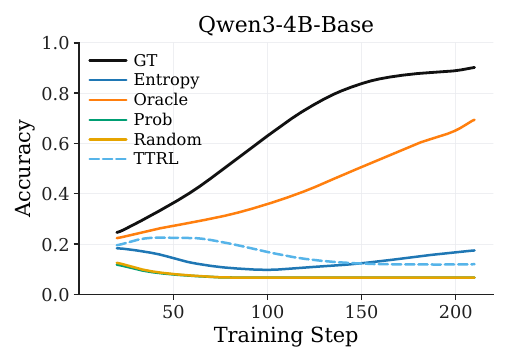}
    \end{minipage}
    \begin{minipage}[t]{0.16\linewidth}
        \centering
        \includegraphics[width=\linewidth]{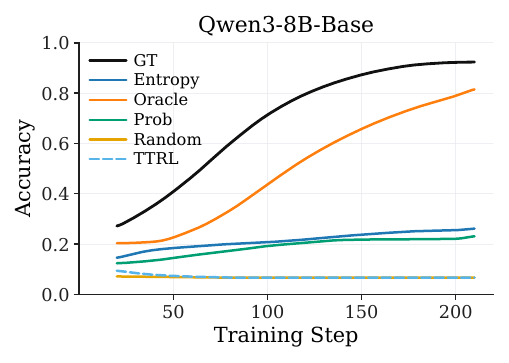}
    \end{minipage}
    \begin{minipage}[t]{0.16\linewidth}
        \centering
        \includegraphics[width=\linewidth]{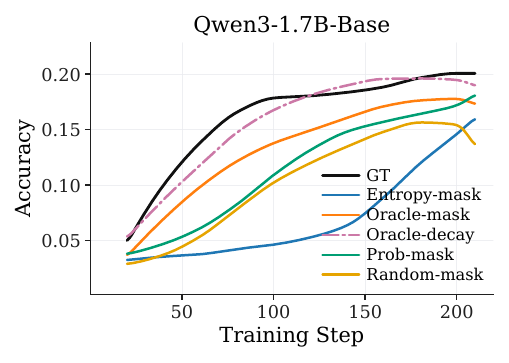}
    \end{minipage}
    \begin{minipage}[t]{0.16\linewidth}
        \centering
        \includegraphics[width=\linewidth]{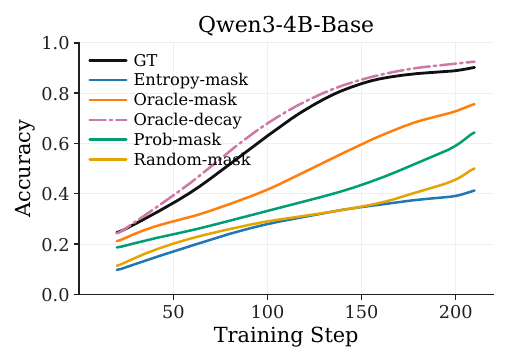}
    \end{minipage}
    \begin{minipage}[t]{0.16\linewidth}
        \centering
        \includegraphics[width=\linewidth]{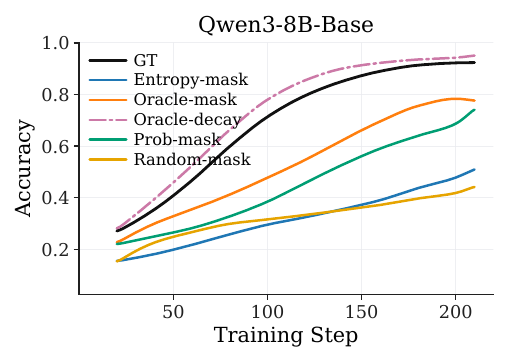}
    \end{minipage}
    \caption{Comparison of label acquisition strategies on K\&K pseudo-labeled samples using the Qwen3 model without masking \textbf{(Left)} and with masking \textbf{(Right)}.}
    \label{fig:qwen_pre2_kk}
\end{figure}

We further compared the effectiveness of different methods under the condition that only samples with ground-truth labels are used, with other samples masked out. Figure~\ref{fig:annotation_value_comparison_mask} and Figure~\ref{fig:phi_pre_exp} (Right) report the results on Qwen3 and Phi4-mini-instruct on math datasets, respectively. Figure~\ref{fig:qwen_pre2_kk} (Right) and Figure~\ref{fig:phi_pre_exp_kk} (Right) report the results on Qwen3 and Phi4-mini-instruct on K\&K datasets, respectively. In the absence of KL constraints, the Phi model shows occasional instability, and both Oracle-decay and GRPO tend to collapse during training, but the optimal performance of Oracle-decay remains markedly superior to that of other approaches. The results show a generally consistent trend: Oracle-mask performs favorably in most cases, providing empirical support for using CAG as a criterion for supervision value selection. We also experimented with using the CAG metric to decay the advantage:

\begin{equation}
\bar{A}_{i,g} = A^{\star}_{i,g} \times \mathrm{exp}(-100 \cdot s_i)
\end{equation}

denoted as Oracle-decay. With this large decay coefficient, the advantage of incorrect samples approaches 0, while for unsupervised samples, where the voting is correct, CAG is 0, and the advantage scaling factor is 1, thus integrating correct unsupervised samples into training. This approach achieves performance comparable to GT in most models, further demonstrating the value of properly utilizing unsupervised samples.

\begin{figure}[t]
    \centering
    \begin{minipage}[t]{0.28\linewidth}
        \centering
        \includegraphics[width=\linewidth]{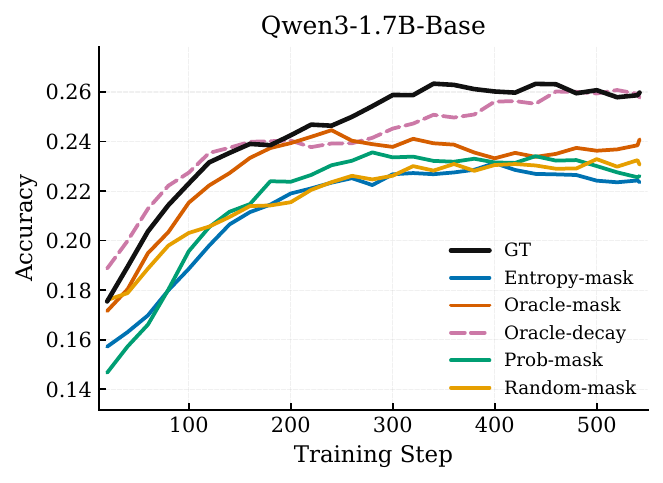}
    \end{minipage}
    \begin{minipage}[t]{0.28\linewidth}
        \centering
        \includegraphics[width=\linewidth]{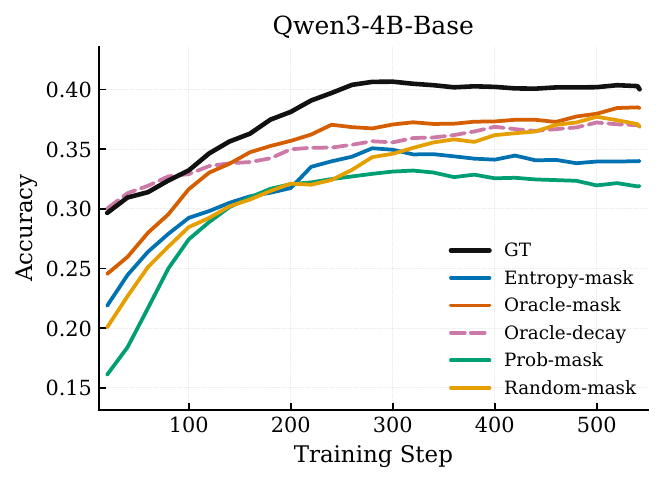}
    \end{minipage}
    \begin{minipage}[t]{0.28\linewidth}
        \centering
        \includegraphics[width=\linewidth]{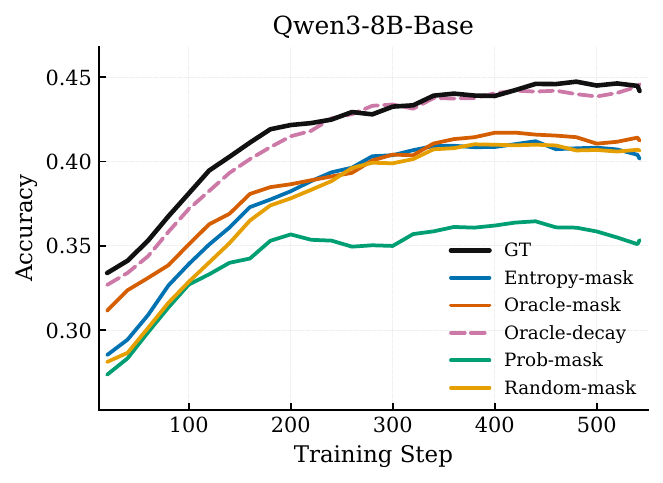}
    \end{minipage}
    \caption{Comparison of label acquisition strategies across three Qwen3 model scales on math tasks.}
    \label{fig:annotation_value_comparison_mask}
\end{figure}

\subsection{More Ablation Study Results}
\label{appendix:more_ablation_results}
We provide ablation study results for the Qwen3-1.7B-Base and Qwen3-8B-Base models on math tasks, as shown in Figure~\ref{fig:analysis_exp2} (left and middle).

\begin{figure}[t]
    \centering
    \begin{minipage}[t]{0.28\linewidth}
        \centering
        \includegraphics[width=\linewidth]{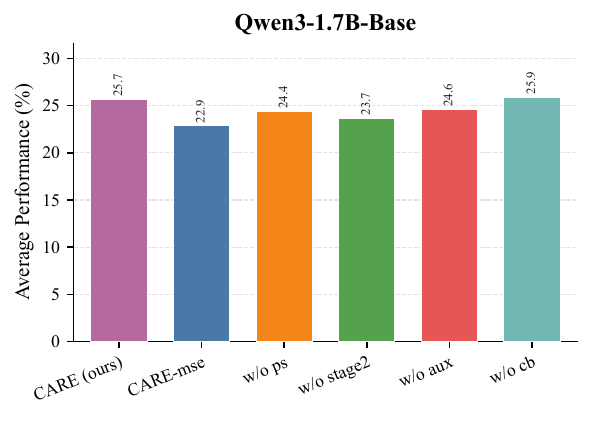}
    \end{minipage}
    \begin{minipage}[t]{0.28\linewidth}
        \centering
        \includegraphics[width=\linewidth]{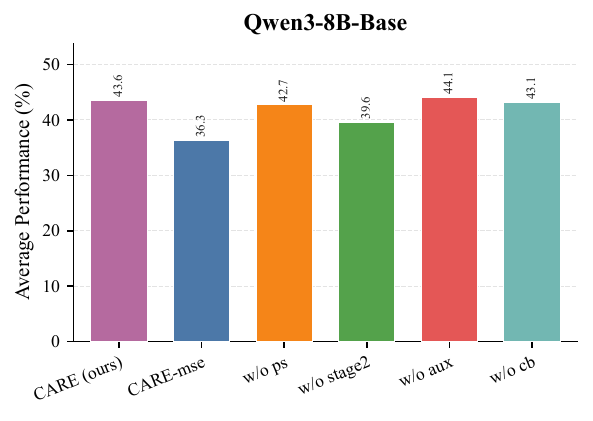}
    \end{minipage}
    \begin{minipage}[t]{0.35\linewidth}
        \centering
        \includegraphics[width=\linewidth]{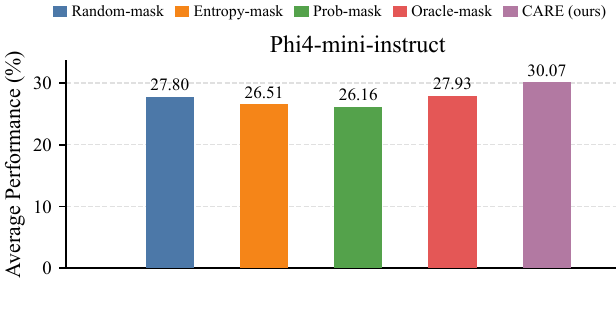}
    \end{minipage}
    \caption{\textbf{(Left and Middle)} CARE component ablation on Qwen3-1.7B-Base and Qwen3-8B-Base on math tasks. \textbf{(Right)} Comparison of training performance without unsupervised samples across baseline methods on Phi4-mini-instruct on math tasks.}
    \label{fig:analysis_exp2}
\end{figure}

\subsection{Additional Analysis of $p_2$ on More Models}
\label{app:p2_analysis}
As shown in Figure~\ref{fig:more_p2_analysis}, for Qwen3-1.7B-Base, the first-stage classifier achieves lower accuracy, likely due to the model's smaller hidden dimension and less stable training. In this case, a larger $p_2$ introduces more erroneous pseudo-labeled samples, while $p_2=0.1$ provides a better accuracy--performance trade-off. For other models, the larger hidden dimensions and more stable training improve classifier learning, making the classifier accuracy less sensitive to $p_2$. In these cases, increasing $p_2$ introduces more unsupervised samples and improves performance, with $p_2=0.25$ yielding the best results.

\begin{figure}[t]
    \centering
    \begin{minipage}[t]{0.66\linewidth}
        \centering
        \includegraphics[width=\linewidth]{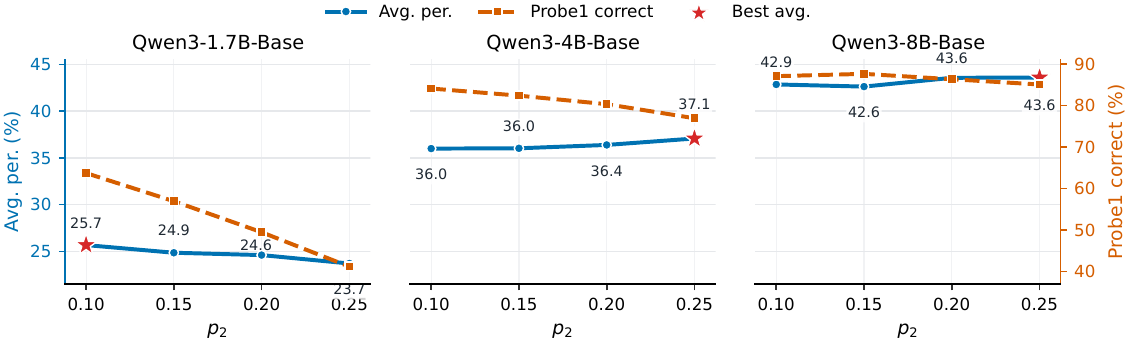}
    \end{minipage}
    \begin{minipage}[t]{0.32\linewidth}
        \centering
        \includegraphics[width=\linewidth]{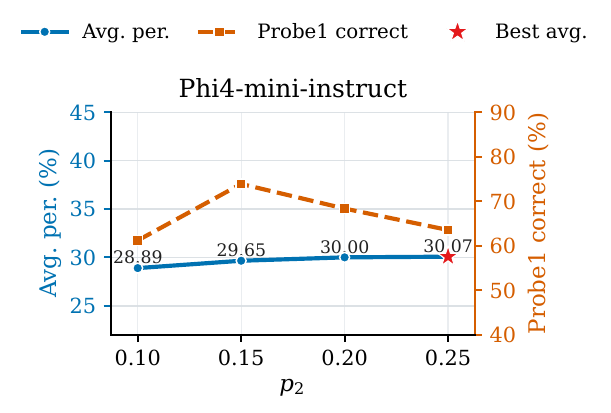}
    \end{minipage}
    \caption{Hyperparameter analysis of $p_2$ on Qwen3-Base (left) and Phi4 (right) models on math tasks.}
    \label{fig:more_p2_analysis}
\end{figure}

\subsection{Main Experimental Results on Math Tasks Using the Phi4 Model}
\label{appendix:phi4_main_exp}
To further examine the effectiveness of CARE across different model families on math tasks, we conduct experiments on Phi4-mini-instruct, following the same hyperparameter settings as those described in Appendix~\ref{appendix:implementation_details}. The experimental results on math tasks are shown in Table~\ref{tab:phi-main-results} and Figure~\ref{fig:analysis_exp2} (right). CARE consistently outperforms other data selection methods in both settings, i.e., with and without pseudo-labeled samples, and even achieves performance comparable to GT. These results further demonstrate the effectiveness of CARE on math tasks.

\begin{table}[t]
\centering
\small
\setlength{\tabcolsep}{3pt} 
\renewcommand{\arraystretch}{1.2}
\caption{Performance (\%) of Phi4-mini-instruct models on math tasks. Vanilla and GT are reported as reference settings and are excluded from best/second-best ranking. The \textbf{bold} and \underline{underline} indicate the best and second-best results among the remaining methods, respectively.}

\resizebox{0.7\linewidth}{!}{%
\begin{tabular}{lccccccc}
\toprule
\textbf{Methods} & \textbf{AIME24} & \textbf{AIME25} & \textbf{MATH500} & \textbf{AMC23} & \textbf{Hmmt25} & \textbf{Olympiad} & \textbf{Average} \\
\hline
\rowcolor{mygray}
\multicolumn{8}{l}{\textbf{\emph{Phi4-mini-instruct}}} \\
\rowcolor{gray!8}
Vanilla & 6.46 & 3.33 & 39.93 & 19.30 & 0.52 & 24.11 & 15.61 \\
\rowcolor{gray!8}
GT  & 11.56 & 9.58 & 73.25 & 47.26 & 0.63 & 38.07 & 30.06 \\
TTRL & 8.44 & \underline{6.67} & 66.30 & 42.34 & 0.63 & 33.55 & 26.32 \\
Random & 7.71 & 5.83 & 67.70 & 41.64 & 0.73 & 33.77 & 26.23 \\
Entropy & 8.65 & 5.83 & 63.95 & 40.63 & 0.73 & 33.10 & 25.48 \\
Prob & 8.02 & 4.90 & 67.83 & 42.34 & \underline{1.25} & 33.14 & 26.25 \\
Oracle & \underline{10.93} & 5.94 & \underline{69.73} & \underline{42.97} & 0.94 & \underline{35.12} & \underline{27.60} \\
\rowcolor{yellow!20}
\textbf{CARE (ours)} & \textbf{12.92} & \textbf{7.81} & \textbf{73.77} & \textbf{47.66} & \textbf{1.35} & \textbf{36.90} & \textbf{30.07} \\

\bottomrule
\end{tabular}%
}

\label{tab:phi-main-results}
\end{table}

\subsection{Main Experimental Results on K\&K Tasks}
\label{appendix:kk_main_exp}
To further verify the effectiveness of CARE on other tasks, we evaluate it in the logical reasoning domain by conducting experiments on the K\&K dataset~\cite{xie2025memorization}, following the same hyperparameter settings as those described in Appendix~\ref{appendix:implementation_details}. We use the 3--7 ppl split for training and the 3--8 ppl split for testing. The dataset statistics are shown in Table~\ref{tab:datasets_statics}. The experimental results of the Qwen3-Base and Phi4-mini-instruct models on the K\&K dataset are shown in Table~\ref{tab:kk-main-results} and Figure~\ref{fig:main-kk-mask-exp-qwen_phi}. CARE consistently outperforms other data selection methods in both settings, i.e., with and without pseudo-labeled samples, validating the effectiveness of CARE on K\&K tasks.

\begin{table}[t]
\centering
\small
\setlength{\tabcolsep}{3pt} 
\renewcommand{\arraystretch}{1.2}
\caption{Performance (\%) of Qwen3-Base and Phi4-mini-instruct models on K\&K tasks. Vanilla and GT are reported as reference settings and are excluded from best/second-best ranking. The \textbf{bold} and \underline{underline} indicate the best and second-best results among the remaining methods, respectively.}

\resizebox{0.7\linewidth}{!}{%
\begin{tabular}{lccccccc}
\toprule
\textbf{Methods} & \textbf{3ppl} & \textbf{4ppl} & \textbf{5ppl} & \textbf{6ppl} & \textbf{7ppl} & \textbf{8ppl} & \textbf{Average} \\
\hline
\rowcolor{mygray}
\multicolumn{8}{l}{\textbf{\emph{Qwen3-1.7B-Base}}} \\
\rowcolor{gray!8}
Vanilla & 2.13 & 1.50 & 0.37 & 0.75 & 0.12 & 0.00 & 0.81 \\
\rowcolor{gray!8}
GT  & 40.25 & 30.13 & 19.00 & 20.87 & 6.75 & 8.12 & 20.85 \\
TTRL & 10.63 & 4.87 & 1.88 & 1.75 & 1.25 & 1.25 & 3.60 \\
Random & 18.25 & 10.00 & 5.00 & 4.25 & 0.88 & 0.00 & 6.40 \\
Entropy & 13.12 & 2.88 & 5.12 & 2.75 & 0.63 & 0.00 & 4.08 \\
Prob & 11.63 & 2.88 & 5.12 & 2.75 & 0.63 & 0.00 & 3.60 \\
Oracle & \underline{27.38} & \underline{20.87} & \underline{15.75} & \underline{15.25} & \underline{4.00} & \underline{5.00} & \underline{14.71} \\
\rowcolor{yellow!20}
\textbf{CARE (ours)} & \textbf{41.37} & \textbf{30.50} & \textbf{17.75} & \textbf{18.63} & \textbf{6.50} & \textbf{11.34} & \textbf{21.02} \\
\hline

\rowcolor{mygray}
\multicolumn{8}{l}{\textbf{\emph{Qwen3-4B-Base}}} \\
\rowcolor{gray!8}
Vanilla & 9.25 & 7.63 & 4.63 & 3.00 & 1.75 & 0.63 & 4.48 \\
\rowcolor{gray!8}
GT  & 99.37 & 98.12 & 95.25 & 88.38 & 85.12 & 74.75 & 90.17 \\
TTRL & 50.25 & 35.37 & 28.13 & 21.88 & 12.87 & 9.25 & 26.29 \\
Random & 56.63 & 43.00 & 28.75 & 24.38 & 13.88 & 10.88 & 29.58 \\
Entropy & 28.25 & 16.75 & 14.12 & 10.50 & 3.00 & 4.13 & 12.79 \\
Prob & 28.37 & 18.00 & 11.87 & 11.12 & 2.62 & 4.87 & 12.81 \\
Oracle & \underline{89.00} & \underline{87.25} & \underline{74.87} & \underline{63.38} & \underline{59.25} & \underline{54.37} & \underline{71.42} \\
\rowcolor{yellow!20}
\textbf{CARE (ours)} & \textbf{97.75} & \textbf{96.25} & \textbf{89.50} & \textbf{83.13} & \textbf{76.75} & \textbf{68.62} & \textbf{85.33} \\
\hline

\rowcolor{mygray}
\multicolumn{8}{l}{\textbf{\emph{Qwen3-8B-Base}}} \\
\rowcolor{gray!8}
Vanilla & 25.25 & 17.50 & 13.88 & 8.75 & 5.50 & 3.62 & 12.42\\
\rowcolor{gray!8}
GT & 99.00 & 97.87 & 97.00 & 89.00 & 88.75 & 84.62 & 92.71 \\
TTRL & 24.88 & 10.37 & 6.75 & 8.12 & 2.00 & 4.00 & 9.35 \\
Random & 25.75 & 12.13 & 8.25 & 10.63 & 2.37 & 4.13 & 10.54 \\
Entropy & 46.00 & 41.00 & 32.62 & 32.00 & 14.00 & 13.00 & 29.77 \\
Prob & 41.12 & 30.38 & 11.20 & 14.25 & 5.00 & 4.00 & 17.79 \\
Oracle & \underline{95.00} & \underline{93.50} & \underline{86.75} & \underline{78.50} & \underline{74.25} & \underline{69.63} & \underline{82.94} \\
\rowcolor{yellow!20}
\textbf{CARE (ours)} & \textbf{99.37} & \textbf{98.00} & \textbf{93.25} & \textbf{91.00} & \textbf{86.75} & \textbf{77.38} & \textbf{90.96} \\
\hline

\rowcolor{mygray}
\multicolumn{8}{l}{\textbf{\emph{Phi4-mini-instruct}}} \\
\rowcolor{gray!8}
Vanilla & 21.88 & 16.00 & 8.00 & 6.37 & 3.00 & 1.75 & 9.50 \\
\rowcolor{gray!8}
GT & 64.75 & 54.12 & 41.75 & 38.12 & 27.88 & 21.00 & 41.27 \\
TTRL & 46.88 & 33.63 & 21.63 & 20.37 & 13.50 & 10.13 & 24.35 \\
Random & 45.50 & 31.87 & 19.00 & 18.50 & 13.12 & 9.13 & 22.85 \\
Entropy & 48.25 & 35.00 & 26.25 & 23.25 & 14.25 & 10.88 & 26.31 \\
Prob & 47.37 & 33.88 & 24.25 & 20.50 & 13.38 & 12.25 & 25.27 \\
Oracle & \underline{54.25} & \underline{44.50} & \underline{32.88} & \underline{33.50} & \underline{23.50} & \underline{17.25} & \underline{34.31} \\
\rowcolor{yellow!20}
\textbf{CARE (ours)} & \textbf{64.50} & \textbf{54.62} & \textbf{37.38} & \textbf{36.00} & \textbf{25.50} & \textbf{22.25} & \textbf{40.04} \\
\bottomrule
\end{tabular}%
}

\label{tab:kk-main-results}
\end{table}

\begin{figure}[t]
    \centering
    \begin{minipage}[t]{0.24\linewidth}
        \centering
        \includegraphics[width=\linewidth]{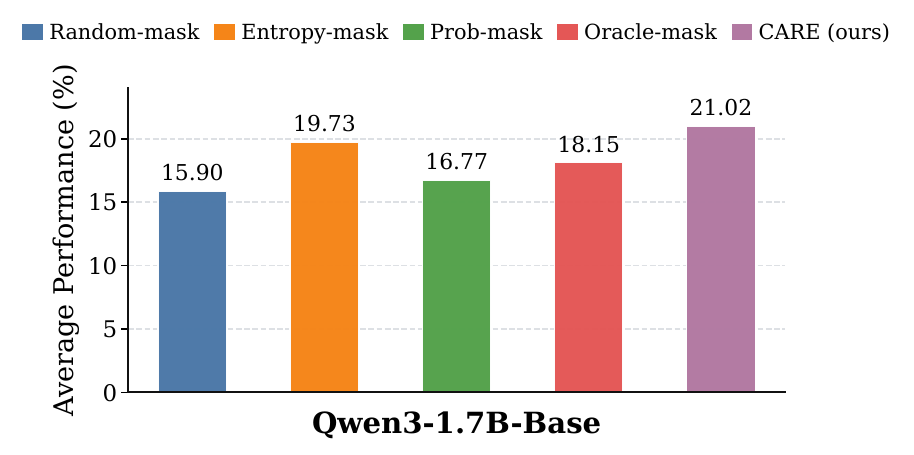}
    \end{minipage}
    \begin{minipage}[t]{0.24\linewidth}
        \centering
        \includegraphics[width=\linewidth]{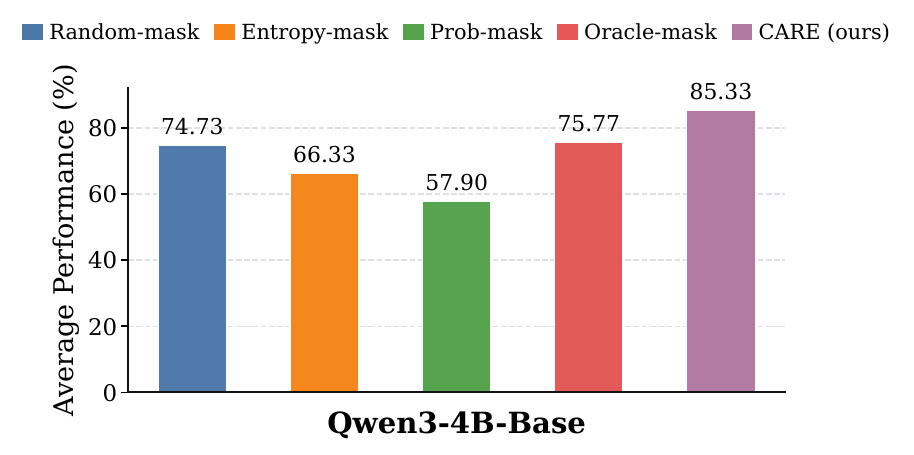}
    \end{minipage}
    \begin{minipage}[t]{0.24\linewidth}
        \centering
        \includegraphics[width=\linewidth]{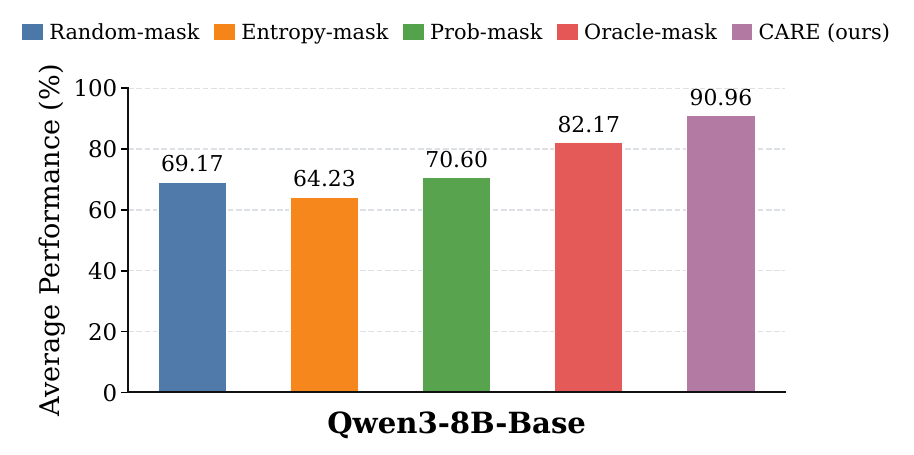}
    \end{minipage}
    \begin{minipage}[t]{0.24\linewidth}
        \centering
        \includegraphics[width=\linewidth]{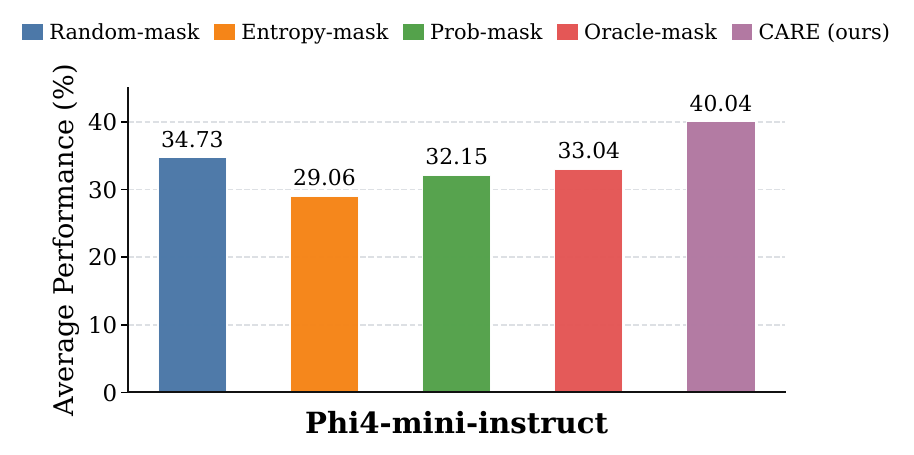}
    \end{minipage}
    \caption{Comparison of training performance among baselines without unsupervised samples on K\&K tasks using Qwen3-Base and Phi4-mini-instruct models.}
    \label{fig:main-kk-mask-exp-qwen_phi}
\end{figure}

\section{Pseudocode of Class-wise CAG Score Calculation}
We present the pseudocode of Class-wise CAG Score Calculation in Algorithm~\ref{alg:cas_class}.
\begin{algorithm}[t]
\caption{Class-wise CAG Score Calculation}
\label{alg:cas_class}
\begin{algorithmic}[1]
\REQUIRE Voted-cluster size $m_i$, admissible class $k\in\mathcal{K}_i$, number of rollouts $G$
\ENSURE Class-wise CAG score $s_i(k)$
\STATE Construct the pseudo reward induced by majority voting:
\[
\tilde{\mathbf r}_i=
[
\underbrace{1,\ldots,1}_{m_i},
\underbrace{0,\ldots,0}_{G-m_i}
]
\]
\STATE Construct the candidate ground-truth reward under class $k$:
\[
\mathbf r_i^{(k)}=
[
\underbrace{0,\ldots,0}_{m_i},
\underbrace{1,\ldots,1}_{k},
\underbrace{0,\ldots,0}_{G-m_i-k}
]
\]
\STATE Compute the pseudo-label advantage $\tilde{\mathbf A}_i$ from $\tilde{\mathbf r}_i$ using Eq.~\eqref{eq:grpo_adv}
\STATE Compute the candidate ground-truth advantage $\mathbf A_i^{(k)}$ from $\mathbf r_i^{(k)}$ using Eq.~\eqref{eq:grpo_adv}
\STATE Compute the class-wise CAG score $s_i(k)$ using Eq.~\eqref{eq:cag}
\RETURN $s_i(k)$
\end{algorithmic}
\end{algorithm}

\section{Pseudocode of CARE}
We present the pseudocode of CARE in Algorithm~\ref{alg:care}.

\begin{algorithm}[t]
\caption{CARE: Correction-Aware Reliability Estimation for RLAVR}
\label{alg:care}
\begin{algorithmic}[1]
\STATE \textbf{Input:} initial policy $\pi_{\theta_{\mathrm{init}}}$; training prompts $D_{\mathrm{train}}$; verifier; annotation ratio $p$, selection ratio of unsupervised samples $p_2$; warmup steps $T_{\mathrm{warm}}$; GRPO steps $\mu$; replay buffer $\mathcal{R}$; first-stage classifier $g_{\psi_1}$; second-stage classifier $g_{\psi_2}$
\STATE \textbf{Output:} policy model $\pi_\theta$
\STATE Initialize policy $\pi_\theta \leftarrow \pi_{\theta_{\mathrm{init}}}$
\STATE Initialize replay buffer $\mathcal{R} \leftarrow \emptyset$
\FOR{iteration $=1,\ldots,I$}
    \STATE Set reference model $\pi_{\mathrm{ref}} \leftarrow \pi_\theta$
    \FOR{step $=1,\ldots,M$}
        \STATE Sample a batch of prompts $D_b$ from $D_{\mathrm{train}}$
        \STATE Update old policy $\pi_{\theta_{\mathrm{old}}} \leftarrow \pi_\theta$
        \STATE Sample $G$ rollouts $\mathcal{B}_i=\{o_{i,1},\ldots,o_{i,G}\}\sim \pi_{\theta_{\mathrm{old}}}(\cdot\mid x_i)$ for each $x_i\in D_b$
        \STATE Compute classifier representation $\phi_i=[\phi_i^{\mathrm{global}};\Phi_i^{\mathrm{ans}}]$ for each $x_i\in D_b$
        \IF{step $\leq T_{\mathrm{warm}}$}
            \STATE Set $\mathcal{S}_{\mathrm{err}}\leftarrow D_b$ and $\mathcal{S}_{\mathrm{unsup}}\leftarrow \emptyset$
        \ELSE
            \STATE Use $g_{\psi_1}$ to predict the vote reliability score $c_i^{(1)}$ for each $x_i\in D_b$
            \STATE Select the top-$p_2\%$ prompts with the largest $c_i^{(1)}$ as $\mathcal{S}_{\mathrm{unsup}}$
            \STATE Let $\mathcal{S}_{\mathrm{err}}\leftarrow D_b\setminus \mathcal{S}_{\mathrm{unsup}}$
        \ENDIF
        \STATE Compute CAG scores $\{\hat{s}_i\}_{x_i\in\mathcal{S}_{\mathrm{err}}}$ using $g_{\psi_2}$ and Algorithm~\ref{alg:cas_class}
        \STATE Select the top-$p\%$ prompts in $\mathcal{S}_{\mathrm{err}}$ with the largest $\hat{s}_i$ as $\mathcal{S}_{\mathrm{sup}}$
        \STATE Acquire ground-truth labels for prompts in $\mathcal{S}_{\mathrm{sup}}$
        \STATE Compute ground-truth rewards and advantages $\{A_{i,g}^{\star}\}_{j=1}^{n}$ for each $x_i\in\mathcal{S}_{\mathrm{sup}}$
        \STATE Compute pseudo rewards and pseudo advantages $\{\tilde A_{i,g}\}_{j=1}^{n}$ for each $x_i\in\mathcal{S}_{\mathrm{unsup}}$
        \STATE Use the reliability-weighted pseudo advantages $\{\bar A_{i,g}\}_{j=1}^{n}$ for each $x_i\in\mathcal{S}_{\mathrm{unsup}}$
        \STATE Construct mixed advantages $A_{i,g}^{\mathrm{mix}}$ using $A_{i,g}^{\star}$ for $\mathcal{S}_{\mathrm{sup}}$ and $\bar A_{i,g}$ for $\mathcal{S}_{\mathrm{unsup}}$
        \FOR{GRPO update $=1,\ldots,\mu$}
            \STATE Update $\pi_\theta$ by optimizing the GRPO objective in Eq.~\eqref{eq:grpo_obj} with $A_{i,g}^{\mathrm{mix}}$
        \ENDFOR
        \STATE Add newly labeled samples from $\mathcal{S}_{\mathrm{sup}}$ to the replay buffer $\mathcal{R}$
        \STATE Update $g_{\psi_1}$ and $g_{\psi_2}$ using current labeled samples and replayed samples from $\mathcal{R}$
    \ENDFOR
\ENDFOR
\STATE \textbf{return} $\pi_\theta$
\end{algorithmic}
\end{algorithm}

\section{Limitations}
\label{sec:limitations}
Our work has several limitations. Due to limited computational resources, our experiments do not cover larger-scale models, with the largest evaluated model being 8B. In addition, the current evaluation mainly focuses on text-only mathematical and logical reasoning tasks, leaving broader task domains and multimodal scenarios unexplored. Moreover, some hyperparameters in CARE still rely on manual tuning. In future work, we plan to validate CARE on larger models and more diverse RLAVR scenarios, including code generation, tool use, agentic tasks, and multimodal tasks. We also aim to make the parameter selection process adaptive to task characteristics and model capabilities.

\section{Broader Impacts}
\label{sec:broader_impacts}
This work introduces the new settings RLAVR, where acquiring ground-truth labels is expensive. By improving the use of limited supervision, CARE may help improve data efficiency and reduce annotation costs in domains where labels are difficult to obtain. The potential negative impact is limited, as our work focuses on methodological improvements rather than specific downstream applications. However, as with other advances in LLMs, improved training methods may indirectly enhance generative capabilities, which could be misused for generating misleading or harmful content. These risks are general to the field and not specific to our approach.

\section{Use of Large Language Models}
Some parts of the text were refined with the assistance of large language models (LLMs). All content and responsibility for the work remain with the authors.

\section{Case Study}
\label{appendix:case_study}
We present an example of the outputs generated by Qwen3-4B-Base trained with CARE on DAPO-17k training set in Figure~\ref{fig:4b_care_case1}, where the model correctly answers the question. We further provide a CAG calculation example in Figure~\ref{fig:cag_calculate_case}.

\begin{figure*}[htbp]
    \centering
    \includegraphics[width=1\textwidth]{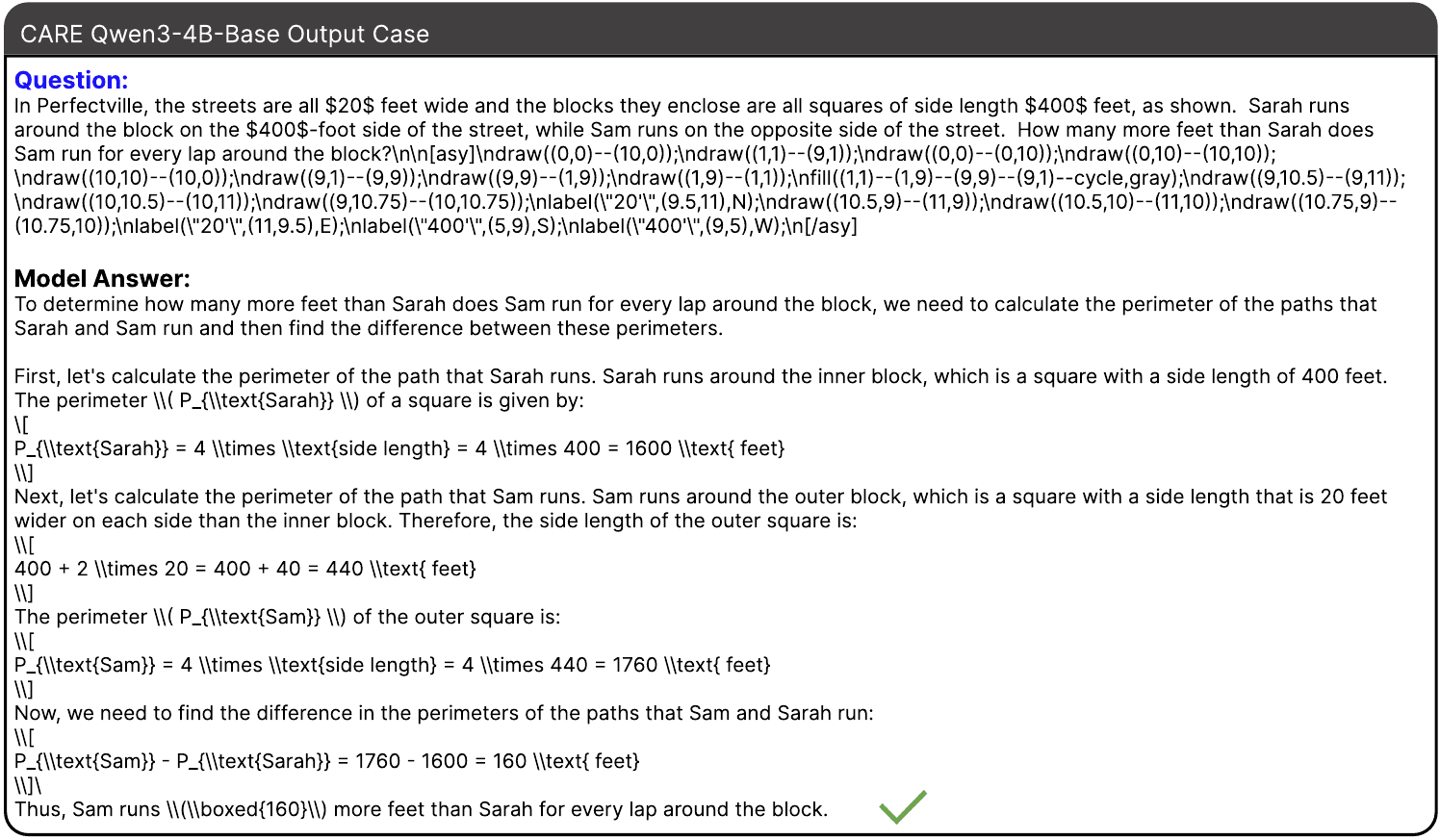}
    \caption{Example 1 of the CARE method using the Qwen3-4B-Base model.}
    \label{fig:4b_care_case1}
\end{figure*}

\begin{figure*}[htbp]
    \centering
    \includegraphics[width=1\textwidth]{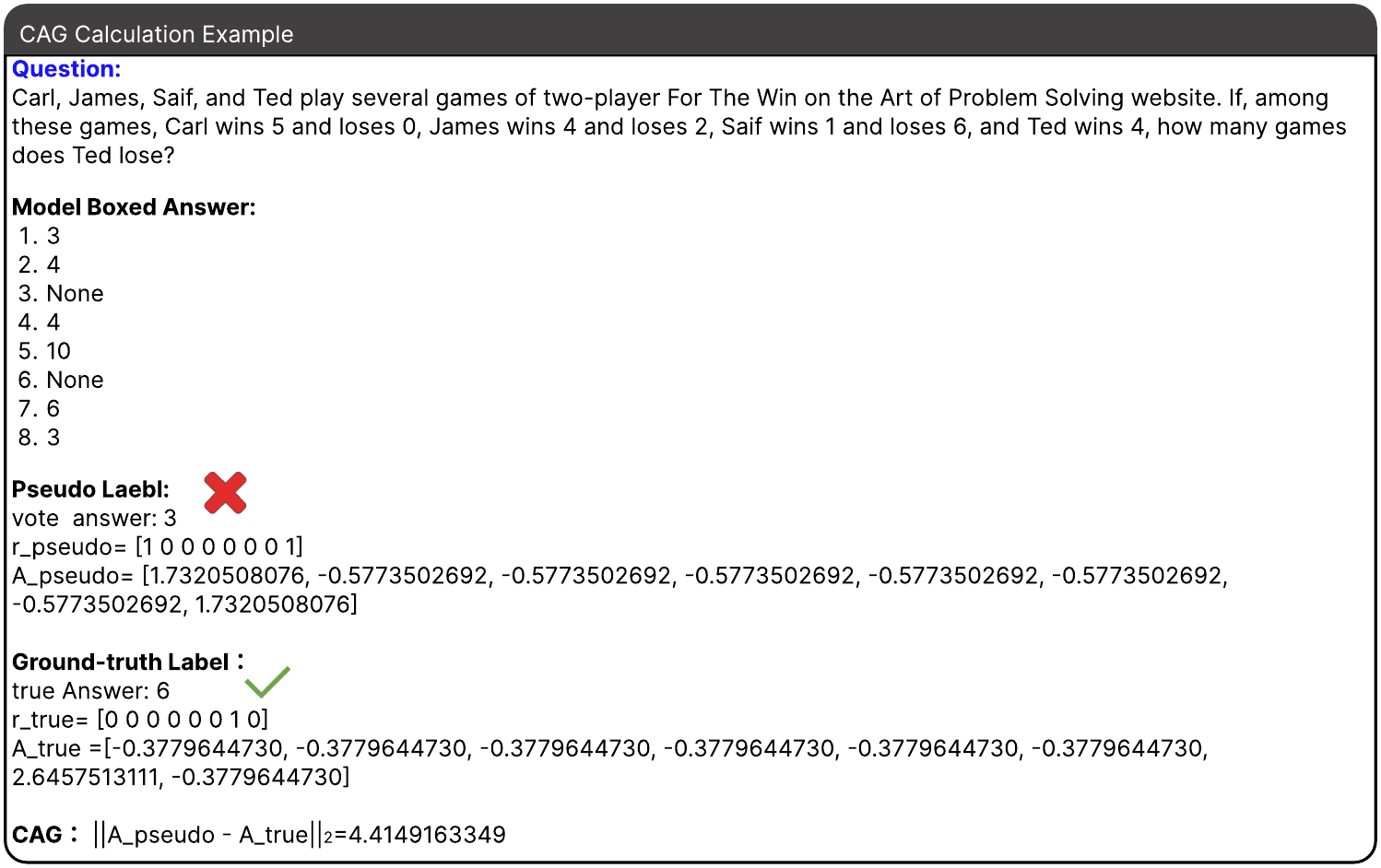}
    \caption{CAG calculation example.}
    \label{fig:cag_calculate_case}
\end{figure*}

\end{document}